\documentclass[10pt,twocolumn,twoside]{IEEEtran}

\ifCLASSINFOpdf
  % \usepackage[pdftex]{graphicx}
  % declare the path(s) where your graphic files are
  % \graphicspath{{../pdf/}{../jpeg/}}
  % and their extensions so you won't have to specify these with
  % every instance of \includegraphics
  % \DeclareGraphicsExtensions{.pdf,.jpeg,.png}
\else
  % or other class option (dvipsone, dvipdf, if not using dvips). graphicx
  % will default to the driver specified in the system graphics.cfg if no
  % driver is specified.
  % \usepackage[dvips]{graphicx}
  % declare the path(s) where your graphic files are
  % \graphicspath{{../eps/}}
  % and their extensions so you won't have to specify these with
  % every instance of \includegraphics
  % \DeclareGraphicsExtensions{.eps}
\fi

\usepackage{amsmath}
\usepackage{amssymb}
\usepackage{amsthm}

\makeatletter
\let\MYcaption\@makecaption
\makeatother

\usepackage[font=footnotesize]{subcaption}

\makeatletter
\let\@makecaption\MYcaption
\makeatother
\usepackage{booktabs}
\usepackage{multirow}
\setlength{\topsep}{3pt}
\newtheorem{theorem}{Theorem}[section]
\newtheorem{proposition}[theorem]{Proposition}

\newtheorem{lemma}[theorem]{Lemma}

\theoremstyle{definition}
\newtheorem{definition}[theorem]{Definition}
\newtheorem{problem}[theorem]{Problem}

\theoremstyle{remark}

\theoremstyle{definition}

\usepackage{tikz}
\usetikzlibrary{shapes.geometric}
\usetikzlibrary{arrows.meta,arrows}
\usetikzlibrary{automata,positioning}
\usepackage{tkz-berge}

\usepackage{epstopdf}
\usepackage{color}
\usepackage{comment}
\usepackage[flushleft]{threeparttable}
\usepackage{array}
\usepackage{graphicx}
\usepackage{booktabs}
\usepackage{pgfplots}
\usepackage{url}
\usepackage{tikz}
\usepackage{multirow}
\usetikzlibrary{patterns}
\usetikzlibrary{arrows.meta}
\usetikzlibrary{matrix,calc,shapes,arrows.meta,backgrounds,positioning-plus,node-families}

% tikz styles for flowcharts
\tikzset{
    box/.style = {
        shape=rectangle, 
        rounded corners, 
        draw, 
        anchor=center, 
        align=center,
        top color=white, 
        bottom color = orange!20, 
        inner sep=1ex},
    process/.style  = {box,
        text width=5em,
        minimum height=1cm
        },
    dummy/.style    = {box, circle, draw},
    decision/.style = {process, diamond, inner sep=0pt},
    root/.style     = {dummy, bottom color=red!30},
    finish/.style   = {dummy, bottom color=green!40},
    connect/.style = {draw, circle, minimum size=1mm, inner sep=0, fill=blue},
    input/.style = {box, 
        trapezium, 
        bottom color=blue!30,
        trapezium left angle=70,
        trapezium right angle=110},
    header node/.style = {
        %Minimum Width = header nodes,
        rectangle,
        font          = \ttfamily,
        text depth    = +0pt,
        fill          = white,
        draw},
    header/.style = {
        inner ysep = +1.5em,
        append after command = {
          \pgfextra{\let\TikZlastnode\tikzlastnode}
          node [header node, anchor=west] (header-\TikZlastnode) at (\TikZlastnode.north west) {#1}
          node [span = (\TikZlastnode)(header-\TikZlastnode)]
            at (fit bounding box) (h-\TikZlastnode) {}},
        },
    hv/.style = {to path = {-|(\tikztotarget)\tikztonodes}},
    vh/.style = {to path = {|-(\tikztotarget)\tikztonodes}},
    control/.style = {blue}
}
\usepackage[left=2cm,right=2cm,top=2cm,bottom=2cm]{geometry}
\usepackage[ruled,vlined,linesnumbered]{algorithm2e}
\usepackage{cite}
\usepackage{xcolor}
\usepackage{ifthen,xkeyval,calc}

\usepackage{optidef}

\DeclareMathOperator*{\argmin}{arg\,min}

\usepackage{algpseudocode}

\urldef{\smith}\url{stephen.smith@uwaterloo.ca}
\urldef{\florence}\url{f4tsang@uwaterloo.ca}
\urldef{\tristan}\url{tlwalker@uwaterloo.ca}
\urldef{\armin}\url{a6sadegh@uwaterloo.ca}
\urldef{\ryan}\url{ryan.macdonald@uwaterloo.ca}

\title{\LARGE \bf
LAMP: Learning a Motion Policy \\ to Repeatedly Navigate in an Uncertain Environment}

\author{Florence Tsang, Tristan Walker, Ryan A.\ MacDonald, Armin Sadeghi, and Stephen L.\ Smith
\thanks{This research is partially supported by the Natural Sciences and Engineering Research Council of Canada (NSERC). }
\thanks{T.\ Walker, A.\ Sadeghi and S.\ L.\ Smith are with the Department of Electrical and Computer Engineering, University of Waterloo, Waterloo ON, N2L 3G1    Canada (\tristan; \armin; \smith)} 
\thanks{F.\ Tsang and R.\ A.\ MacDonald were with the Department of Electrical and Computer Engineering at the University of Waterloo when this research was conducted 
 (\florence; \ryan)}}

\pgfplotsset{compat=1.15}
\begin{document}
\maketitle
\begin{abstract}
Mobile robots are often tasked with repeatedly navigating through an environment whose traversability changes over time.  These changes may exhibit some hidden structure, which can be learned. Many studies consider reactive algorithms for online planning, however, these algorithms do not take advantage of the past executions of the navigation task for future tasks. In this paper, we formalize the problem of minimizing the total expected cost to perform multiple start-to-goal navigation tasks on a roadmap by introducing  the  Learned  Reactive  Planning  Problem. We propose a method that captures information from past executions to learn a motion policy to  handle  obstacles that  the  robot  has  seen  before. We propose the LAMP framework, which integrates the generated motion policy with an  existing navigation stack. Finally, an extensive set of experiments in simulated and real-world environments show that the proposed method outperforms the state-of-the-art algorithms by $10\%$ to $40\%$ in terms of expected time to travel from start to goal. We also evaluate the robustness of the proposed method in the presence of localization and mapping errors on a real robot.
\end{abstract}

\emph{keywords}: Motion and Path Planning, Learning and Adaptive Systems, Reactive and Sensor-Based Planning, and Active Sensing

%----------------------------------------------------------------------
\section{Introduction} 
\label{sec:intro}
%----------------------------------------------------------------------

A common approach to dealing with uncertainties while navigating is to plan a shortest path given the current map. If the path is blocked, then the map is updated and path is re-planned. There are many shortest path algorithms that can be utilized to accomplish this (see  \cite{lavalle2000rapidly},\cite{kuffner2000rrt},\cite{devaurs2011parallelizing}, \cite{kavraki1996probabilistic}). This is known as an \emph{optimistic} policy, which is suitable for situations where the robot is not expected to operate in the same environment for a long period of time (e.g. search and rescue missions, exploration missions). However, from a long-term operation perspective (e.g. deployment in a mall or warehouse), this policy is not ideal:  It only considers the obstacles given by the map and other obstacles within its immediate sensor range when forming its strategy, without considering past executions, which could result in consistently `bad' routes to the goal.

Figure \ref{fig:optbad} illustrates an example of an optimistic policy performing poorly. Consider the case where the robot operates in the environment shown in Figure~\ref{subfig:optbad1} 75\% of the time and operates in the environment shown in Figure \ref{subfig:optbad2} otherwise. An optimistic approach will always attempt the orange path in environment (b), resulting in a considerably longer travel distance than the optimal path, shown in blue. One could plan to always execute the blue path, but that would be considerably longer than optimal when the robot is in environment (c). However, what if the robot eventually learns to take the blue path only in environment (b) as it realizes that the door on the right will never be open if the door on the left is closed?

In this paper, we present the Learning a Motion Policy (LAMP) framework which uses an alternative strategy; one that can learn and exploit such hidden properties of a real-world environment, so that over time, the expected cost of the path to reach the goal is minimized. The LAMP framework is designed to be easily deployable on a real robot such as the one shown in Figure \ref{fig:front_image}.

\begin{figure}
        \centering
    \begin{subfigure}{\linewidth}
        \includegraphics[width=\linewidth]{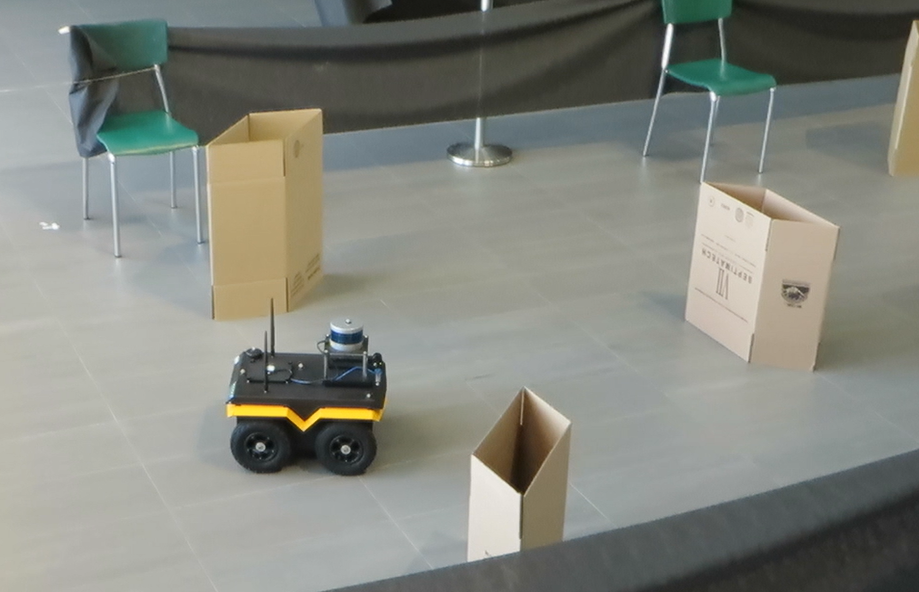}
        \caption{The Jackal using LAMP to navigate a changing maze.}
        \label{fig:front_image}
    \end{subfigure}
    \centering
    \begin{subfigure}{.49\linewidth}
        \begin{center}
        \includegraphics[angle=-90,width=.95\linewidth]{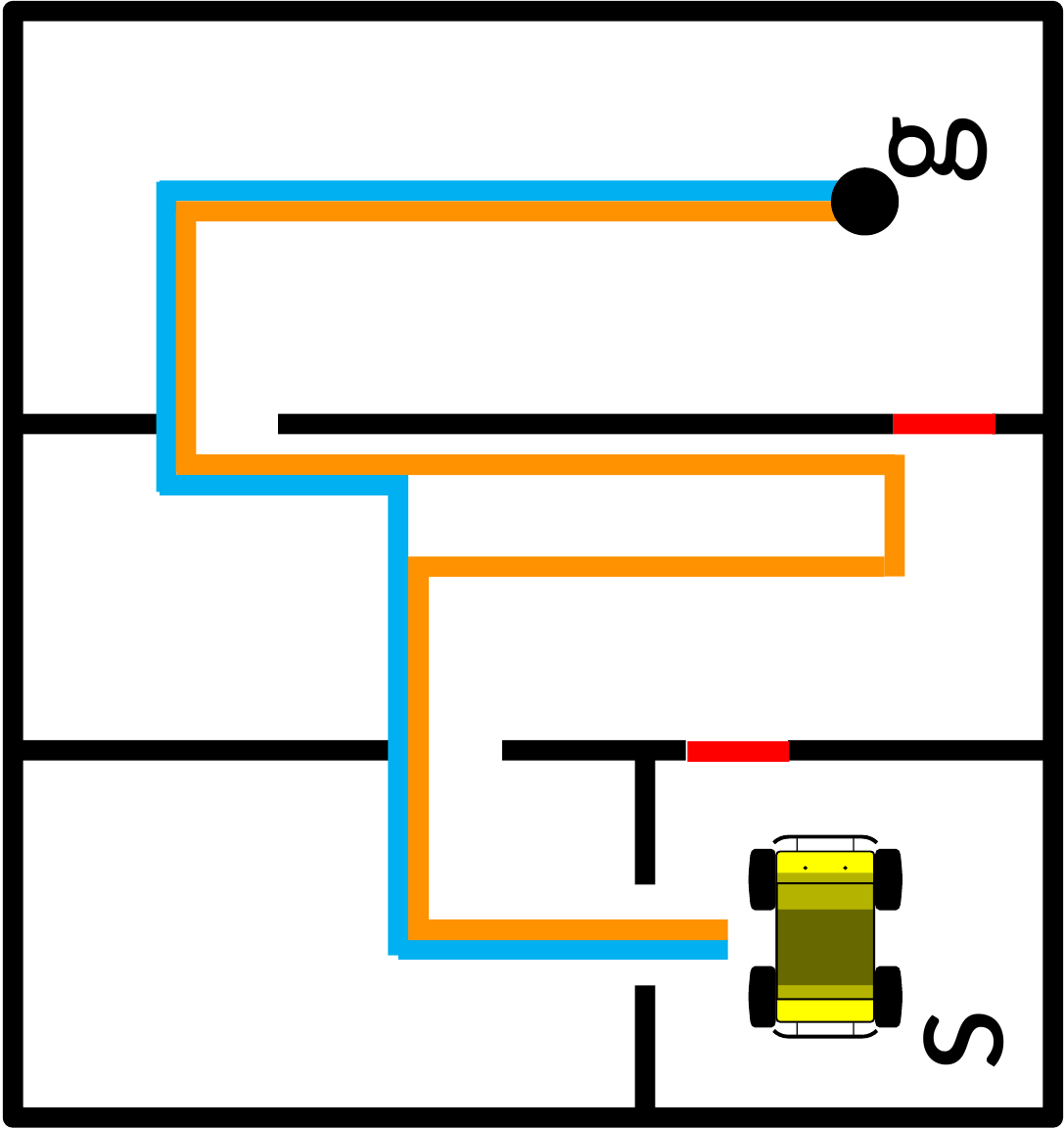}
        \end{center}
        \caption{$p=0.75$}
        \label{subfig:optbad1}
    \end{subfigure}
    \begin{subfigure}{.49\linewidth}
        \begin{center}
        \includegraphics[angle=-90,width=.95\linewidth]{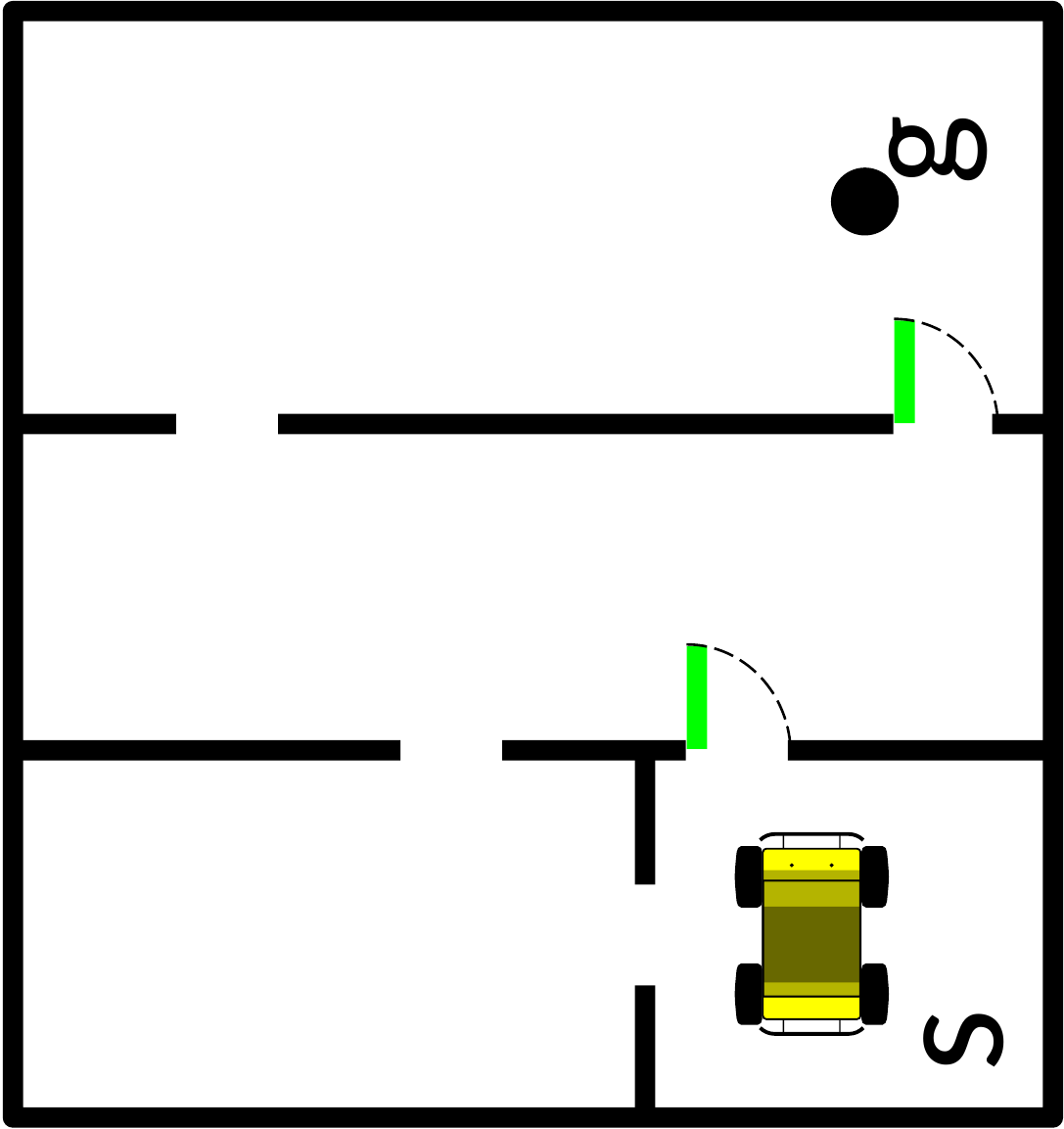}
        \end{center}
        \caption{$p=0.25$}
        \label{subfig:optbad2}
    \end{subfigure}
    \caption{Two configurations of the environment are shown in (b) and (c) with their probability of occurrence. In environment (b), the path a robot may take given an optimistic planner is shown in orange, while the optimal path is shown in blue.}
    \label{fig:optbad}
\end{figure}

\emph{Related Work:} The optimistic policy has been the most prevalent navigation approach because of its simplicity to implement. Examples include D*Lite \cite{Koenig2002}, LPA* \cite{koenig2004lifelong}, etc. More sophisticated algorithms such as Risk-Aware Graph Search~\cite{chung2019} aim to improve performance in unstructured environments.
However, as shown by the example in Figure \ref{fig:optbad}, sometimes there may be structure in the traversability of the environment that can be exploited. Kucner et al. ~\cite{kucner2013conditional} presents a method for learning motion patterns over time, but this work focuses more on dynamic uncertainties such as moving obstacles as opposed to the changes in traversability patterns considered here. 

A closely related problem to navigation with uncertainties is the Canadian Traveller's Problem (CTP). Informally, the CTP is the problem of, given a graph, finding the optimal path for a robot to take from start to goal. However, some of the edges are blocked (the robot cannot traverse that edge), and the state of an edge is only revealed to the robot when it reaches an endpoint of the edge. A common variant of this problem is the stochastic CTP, where each edge has a known probability of being blocked. The original problem was defined by Papadimitriou et al. \cite{papadimitriou1991shortest}, and  many other variations have been suggested (see \cite{Bnaya2009}, \cite{Lim2017}, \cite{Guo2019}).

Optimal policies for the stochastic CTP can be calculated using algorithms such as CAO*\cite{aksakalli2016based} or value iteration. However, they must limit the size of the problem or the number of observations the robot is allowed to make to keep the problem tractable.
Some algorithms for the stochastic CTP, such as UCT-CTP \cite{Eyerich2010}, optimize a policy by sampling environments based on the edge probabilities and estimating the expected cost by simulating the policy over this series of environments. These runs are known as \emph{rollouts}. The more rollouts that are performed, the more accurate the cost estimate will be, and generally the better the resulting policy at the cost of higher runtimes. Other algorithms use a heuristic to approximate the next step in a policy, such as the Hedged Shortest Path under Determinization (HSPD) algorithm proposed in \cite{Lim2017} or the algorithm proposed for the Reactive Planning Problem \cite{MacDonald2018}. Both assume there is structure to the environment uncertainties that can be exploited to find a minimum cost policy.

There has been very little work on two fronts. First, existing CTP variations and subsequently, their solutions, assume the probability of an edge being blocked is known, which is generally not true in practice. Only the work by Nardi and Stachniss~\cite{nardi2020long}, and Tsang et al. \cite{Tsang2019} addresses the problem of capturing this information from the environment. In~\cite{nardi2020long} authors propose a factor graph method to approximate the correlation between edge traversabilities. The proposed structure of the factor graph limits their approach to only capture binary correlations between edges. In comparison, our proposed method stores the observed environments during the task executions as subgraphs of the complete environment, which allows it to learn higher-order correlations between edges.

The second shortcoming of the existing literature is that there is little, if any, work that combines these CTP algorithms with existing navigation systems to form a pipeline that can be implemented on a physical platform. Many of these algorithms are graph-based, which need to be adapted to fit the continuous world the robot operates in. Additionally, it is typically assumed that the edge state can be determined at an edge endpoint, which is not necessarily true, especially in indoor scenarios. For example, a robot may not realize a door on the side of a hallway is closed or locked until it gets closer to the door.
Therefore, in this paper, we propose an edge resolver algorithm (i.e. determining whether an edge is blocked or unblocked) under uncertainties using a costmap \cite{Lu2014}.

%----------------------------------------------------------------------
% \subsection{Reinforcement Learning}
%----------------------------------------------------------------------

 Aksakalli et al. \cite{aksakalli2016based} showed that the stochastic CTP can be modelled as a Markov Decision Process (MDP) and a deterministic Partially Observable Markov Decision Process (POMDP), both of which are formulations frequently used in reinforcement learning. However, generic POMDP solvers cannot be used to find optimal policies for the CTP in practical applications because the state space for the stochastic CTP is exponential.

There have been advances to utilize neural networks to approximate value functions for MDPs with large state and action spaces. This combination of neural networks and traditional reinforcement learning is coined Deep Reinforcement Learning (DRL). Kanezaki et al. \cite{Kanezaki2018} proposed a path planning algorithm based on DRL that performs well in the presence of many unexpected obstacles, but the behaviour is that of an optimistic algorithm. Furthermore, Lillicrap and Hunt et al. \cite{lillicrap2015continuous} introduced the Deep Deterministic Policy Gradient (DDPG), an algorithm to solve reinforcement learning problems with continuous action spaces (for example, balancing a quadruped, running, etc). Deep learning is also applied to predict an environment given a certain structure (\cite{saroya2020online}, \cite{caley2019deep}, \cite{shrestha2019}). The drawback of using these techniques for navigation with uncertainty is the need for training before the algorithm can be used in practice. These predictive approaches are trained for a specific type of environment (e.g. tunnels, offices), and can only be applied to these specific scenarios. Unlike tasks such as learning to play a specific video game or learning how to walk, the state space for the CTP can vary significantly with each new environment.
If the deployment environment is different from the training data, the algorithm may perform poorly and have to be retrained. Obtaining data for training such an algorithm is also a challenge. In contrast, our solution does not require any training, and can be deployed with minimal setup in a variety of scenarios.

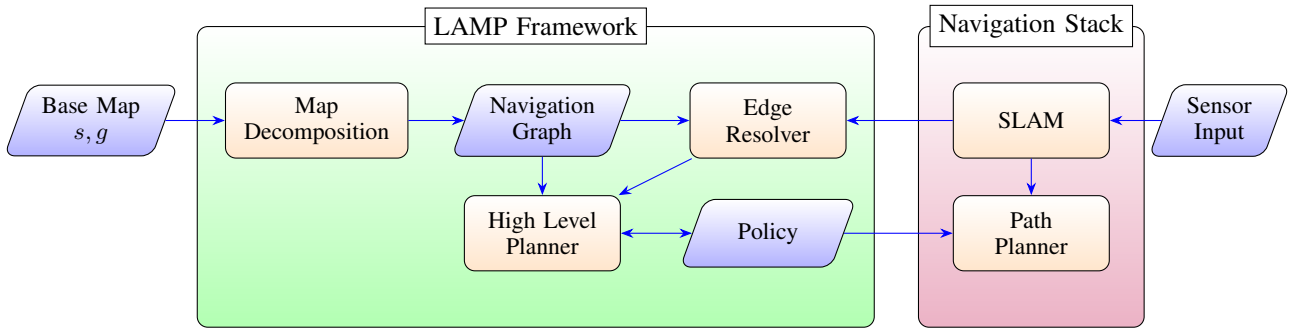
\begin{figure*}[t]
    \centering
    \begin{tikzpicture}[node distance = 1.2cm, nodes= {align = center}, >=Stealth]
    \tikzstyle{every node}=[font=\small]
        \node [input] at (-8,0) (base) {Base Map \\ $s,g$};
        \node [input] at (-2,0) (navgraph) {Navigation \\ Graph};
        \node [process, text width=6em] at (-5,0) (decomp) {Map \\ Decomposition};
        \node [process] at (1,0) (monitor) {Edge \\ Resolver};
        \node [process] at (-2,-1.5) (solver) {High Level \\ Planner};
        \node [input, minimum height = 0.9cm] at (1,-1.5) (policy) {Policy};
        \node [process] at (4.5,0) (localization) {SLAM};
        \node [process] at (4.5,-1.5) (planner) {Path \\ Planner};
        \node [input] at (7,0) (sensors) {Sensor \\ Input};

         \draw[control, ->] (base) -> (decomp);
         \draw[control, ->] (decomp) -> (navgraph);
         \draw[control, ->] (navgraph) -> (monitor);
         \draw[control, ->] (navgraph) -> (solver);
         \draw[control, ->] (policy) -> (planner);
         \draw[control, ->] (localization) -> (monitor);
         \draw[control, ->] (monitor) -> (solver);
         \draw[control, ->] (localization) -> (planner);
         \draw[control, ->] (sensors) -> (localization);
         \draw[control, <->] (solver) -> (policy);

        \begin{scope}[on background layer]
            \node[fit =(decomp)(solver)(monitor)(policy)(navgraph), box, minimum height=4cm, minimum width=9cm, bottom color=green!30] (lamp){};
            \node at (lamp.north) [fill=white,draw,font=\fontsize{10}{0},text=black] 
            (lrpp-label) {LAMP Framework};
        \end{scope}
        \begin{scope}[on background layer]
            \node[fit =(planner)(localization), box, minimum width=3cm, minimum height=4cm, bottom color=purple!30] (nav){};
            \node at (nav.north) [fill=white,draw,font=\fontsize{10}{0},text=black] 
            (nav-label) {Navigation Stack};
        \end{scope}
    \end{tikzpicture}
    \caption{Overall LAMP architecture. Arrows represent data flow between components. The parallelograms represent inputs and outputs while rectangles are nodes that process data.}
    \label{fig:overall}
\end{figure*}

%----------------------------------------------------------------------
% \subsection{Contributions}
\emph{Contributions}: The contributions of this paper are four-fold: First, we formulate the Learned Reactive Planning Problem (LRPP) as the problem of repeatedly navigating in an unknown environment while minimizing the total expected cost. Then we prove that a special case of the problem with known probabilities of environment configurations is PSPACE-hard. Second, we provide the RPP-Hybrid policy, a method in discrete environments to learn correlations between obstacles from past executions and propose an adaptive planning algorithm, which improves its policy with every task. Third, we propose the LAMP framework for navigating in continuous environments, which utilizes a hybrid topological representation of the environment, and integrates our discrete planner with an off-the-shelf navigation stack. Finally, we evaluate the performance of the proposed algorithms in discrete and continuous environments with an extensive set of simulated and real-world experiments.

A preliminary version of this work appeared in the conference paper~\cite{Tsang2019}.  This initial work introduced the LRPP in discrete environments and provided simulation results in a grid world. The key additional contributions of this paper are the extension of the problem to continuous environments, the hybrid decomposition of the environment, and the LAMP framework to integrate our discrete planner and a navigation stack.  We also greatly expanded the evaluation:  we provide a comparison of the discrete algorithm with the state-of-the-art method proposed in~\cite{nardi2020long}, an extensive set of experiments evaluating the LAMP framework in high-fidelity Gazebo simulations, and results of the LAMP framework in physical experiments. These latter experiments also evaluate the robustness of the proposed method in presence of localization and mapping errors.  We have also revamped and improved the presentation of the discrete algorithm, and have now formally proved several key properties.

An outline of the LAMP framework is shown in Figure~\ref{fig:overall}. The paper is organized as follows. In Section~\ref{sec:problem_formulation}, we formulate the problem and in Section~\ref{subsec:LRPPsol} we provide our solution approach in discrete environments, the RPP-Hybrid policy, which is used in the high level planner. In Section~\ref{sec:lamp} we provide the implementation of the solution approach for continuous environments, which forms the LAMP Framework component. We present and discuss our simulation and experimental results in Section~\ref{sec:implresults}.

%----------------------------------------------------------------------
\section{Problem Formulation}
\label{sec:problem_formulation}
%----------------------------------------------------------------------
In this section, we formulate the problem of executing a sequence of $T$ navigation tasks in an environment with uncertainties. 

%----------------------------------------------------------------------
\subsection{Environment Model} 
%----------------------------------------------------------------------

Consider a robot executing a sequence of $T \geq 1$ start-to-goal tasks. The environment is represented as an undirected graph $G = (V, E)$, where $V$ is the vertex set, $E$ is the set of edges between the vertices and let $c: E \rightarrow \mathbb{R}_{+}$ be a function assigning the cost of traversal to the edges. A path $P$ in the graph is defined by a sequence of vertices $\langle v_1, \ldots, v_k \rangle$ that satisfies $(v_i,v_{i+1}) \in E$ for all $i\in\mathbb{N}_{k-1}$. With a slight abuse of notation, we denote the cost of executing a path $P$ by $c(P) = \sum_{i=1}^{k-1} c(v_i,v_{i+1})$, and we denote the cost of the minimum cost path between $v,u \in V$ by $c(v,u)$. 

For each task $t \in \mathbb{N}_T$, there is a set of unblocked edges, which is a subset of $E$; edges not in this set are blocked and cannot be traversed. This set can be different for each task $t$. There are $m = 2^{|E|}$ edge subsets of $E$. Let the complete set of subgraphs of $G$ be denoted by $\mathcal{G} = \{G_1,\ldots, G_m\}$, where $G_i = (V, E_i)$ and $E_i \subseteq E$. Thus, for every task $t$, the robot is operating in a subgraph drawn from $\mathcal{G}$. The robot knows $G$, but it does not know which subgraph it is in.

Mathematically, the robot operates in a sequence of $T$ random graphs $\langle G_{X_1},\ldots,G_{X_{T}} \rangle$, where $ X_1,\ldots,X_T$ are independent and identically distributed random variables according to an \emph{unknown} probability mass function (pmf) $p_1,\ldots,p_m$.  That is, the probability that subgraph $t$ is $G_i$ is given by $\mathbb{P}(X_t = i) = p_i$.
We let $G_{X_t}$ denote the random subgraph for task $t$, and $G_{x_t}$ its realization.
When referring to the pmf, we drop the index and use random variable $X$.

We are interested in environments that exhibit a form of structured uncertainty in their traversability, which can be learned over time. By structured, we mean that absence or presence of particular obstacles for a given task $t$ exhibits some correlation (i.e., if the loading area of the warehouse is not traversable, then it is more likely that the packing area will also not be traversable).  In our model, this corresponds to the case where only a small subset of $\mathcal{G}$ dominates the unknown pmf.

\subsection{Robot Observation Model} \label{subsec:robotModel}

Consider a robot positioned at $v \in V$ that wishes to traverse $(v, u) \in E$. The robot is not aware of the current realization $G_x = (V, E_x)$ of the environment. Therefore, the robot must determine if the edge $(v, u)$ is traversable by observing the state (i.e. if it is blocked and thus not traversable). We assume that the robot is capable of sensing the edges incident to its current vertex $v$. Let $I_v \subseteq E$ be the set of edges incident to $v$, then the sensing action at $v$ is a mapping $\gamma_v: I_v \rightarrow \{\text{blocked},\text{unblocked}\}$ where $\gamma_v(e)=\text{unblocked}$ for $e\in E_{x}$ and $\gamma_v(e)=\text{blocked}$ otherwise. With this observation model, we define the \emph{robot state} as a tuple $(v, E^b, E^u)$ where $v$ is the vertex occupied by the robot and $E^b, E^u \subseteq E$ are the sets of observed blocked and unblocked edges, respectively. For simplicity, we assume that the robot does not incur a cost to observe the state of an edge, however, the cost of observation can be added without significant change to the problem formulation or solution.

\subsection{The Learned Reactive Planning Problem}
Given the environment setting and the robot observation model in the environment, the goal is to find a policy that outputs a sequence of actions for each realization of the environment and drives the robot towards the goal state. In more detail, a \emph{policy} is a mapping from each robot state $(v, E^b, E^u)$ to one of the following actions:  1) traverse an edge in $I_v$; 2) observe an edge in $I_v$; or 3) terminate. In this work, we are interested in a class of policies for the navigation problem, namely complete policies.

\begin{definition}[Complete Policy \cite{MacDonald2018}]
Consider a graph $G$ with start $v_s$ and goal $v_g$, along with its complete set of subgraphs $\mathcal{G}$. A policy $\pi$ is complete if for every subgraph $G_i \in \mathcal{G}$ it produces a finite sequence of actions that either reaches the goal state, or correctly determines that the goal cannot be reached in $G_i$.
\end{definition}

Given a graph $G_i$, let $A^{\pi}_{G_i} = \langle a_1, \ldots, a_l\rangle$ be the sequence of actions produced by a policy $\pi$. With a slight abuse of notation, we denote the cost of an action $a$ with $c_a(a)$ where the observation and termination actions have zero costs and the cost of a traversal action across $(u, v)$ is given by $c(u, v)$. Then the total cost of executing $A^{\pi}_{G_i}$ is $\mathrm{cost}(A^{\pi}_{G_i}) = \sum_{i = 1}^{l} c_a(a_i)$. Therefore, the expected cost to complete a task under policy $\pi$ is 
\[
\mathbb{E}_{X}\big[\text{cost}(\pi)] = \sum_{i \in \mathbb{N}_m} \mathbb{P}(X = i) \mathrm{cost}(A^{\pi}_{G_i}).
\]

We consider a sequence of $T$ tasks where the goal is to minimize the total expected cost of executing the tasks. In the execution of task $t$, the robot may use the information collected during tasks $1, \ldots, t - 1$ on the traversability of the different regions of the environment. Formally, we define this as the Learned Reactive Planning Problem.

\begin{problem}[Learned Reactive Planning Problem (LRPP)]
Given a graph $G$ with unknown pmf over all subgraphs $\mathcal{G}$, a start and goal $v_s, v_g\in V$ and number of tasks $T$, find a sequence of $T$ complete policies, $\langle \pi_1,\ldots,\pi_T \rangle$, that minimizes $\sum_{t=1}^T \mathbb{E}_{X}\big[\text{cost}(\pi_t)]$, where $\pi_t$ may depend on the observations made in tasks $1,\ldots, t-1$.
\end{problem}

The following result provides some insight into the complexity of the Learned Reactive Planning Problem.

\begin{proposition}
    Suppose that the robot has completed $T-1$ tasks, which is sufficiently large to have learned a function that can return the probability of any subgraph in polynomial time.  Then, the problem of computing the policy $\pi_T$ that minimizes $\mathbb{E}_{X_T}\big[\text{cost}(\pi_T)]$ is PSPACE-hard.
\end{proposition}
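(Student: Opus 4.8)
The plan is to establish PSPACE-hardness by a polynomial-time reduction from the stochastic Canadian Traveller's Problem (CTP), a problem known to be PSPACE-hard (see~\cite{papadimitriou1991shortest} and follow-up work). The key observation is that, under the hypothesis of the proposition, the single-task problem of computing $\pi_T$ already contains the stochastic CTP as a special case. A stochastic CTP instance consists of a graph $G=(V,E)$ with traversal costs $c$, a start $v_s$, a goal $v_g$, and a blocking probability $p_e\in[0,1]$ for each edge $e\in E$, with edges blocked independently. This induces the pmf $p_i=\prod_{e\in E_i}(1-p_e)\prod_{e\in E\setminus E_i}p_e$ over the subgraphs $G_i=(V,E_i)\in\mathcal{G}$, and this pmf can be evaluated at any $G_i$ in time polynomial in $|E|$. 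Hence the ``learned'' probability function postulated in the proposition --- a map returning $\mathbb{P}(X=i)$ in polynomial time --- may in particular be exactly this product formula, so no actual simulation of learning over $T-1$ tasks is required; the reduction just instantiates the oracle.

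The first step is to fix an arbitrary stochastic CTP instance and build the LRPP instance with the same $G$, $c$, and $v_s,v_g$, and with the probability oracle set to the product formula above; this transformation is clearly polynomial in the input size. The second step is to verify that the two observation models coincide: in the stochastic CTP the traveller, upon reaching a vertex $v$, learns the status of every edge incident to $v$, which is precisely the sensing map $\gamma_v$ on $I_v$ in Section~\ref{subsec:robotModel}, and the robot state $(v,E^b,E^u)$ records exactly the history of such observations; the LRPP action set (traverse an incident edge, observe an incident edge, terminate) subsumes the CTP moves, and the zero observation cost in our model matches the standard CTP. The third step is then immediate: a complete policy $\pi_T$ for the LRPP instance is exactly a policy for the stochastic CTP instance (reaching $v_g$, or else certifying that $v_g$ is disconnected), and $\mathbb{E}_{X_T}[\mathrm{cost}(\pi_T)]$ equals the expected travel cost of that policy in the CTP. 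Therefore an algorithm that computes a cost-minimizing $\pi_T$ solves the stochastic CTP, and PSPACE-hardness of the latter carries over.

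The main point requiring care is the treatment of instances in which $v_g$ may be unreachable, since the LRPP objective is over complete policies, which must also correctly report unreachability. I would handle this either by restricting --- without loss of generality for the hardness argument --- to CTP instances in which $v_g$ is reachable with probability $1$ (e.g., by adjoining a single \emph{backup} edge from $v_s$ to $v_g$ that is never blocked and is more expensive than any realizable route, which leaves the optimal expected cost unchanged), or by noting that on the sub-family of instances with zero disconnection probability the two expected-cost objectives are literally identical and the known PSPACE-hardness constructions for the stochastic CTP already lie in this sub-family. I expect this unreachability bookkeeping, together with confirming the policy-to-policy and cost-to-cost correspondence in the third step, to be the only places needing more than a syntactic translation between the two formalisms; the remainder is a direct identification of the LRPP single-task problem (with a poly-time pmf oracle) with a generalized stochastic CTP.
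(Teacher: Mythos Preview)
Your proposal is correct and follows essentially the same route as the paper: a polynomial-time reduction from the stochastic CTP, instantiating the learned pmf oracle with the product-of-independent-edge-probabilities formula and identifying optimal policies across the two problems. Your treatment is in fact more careful than the paper's, which does not explicitly verify the observation-model correspondence or address the unreachability bookkeeping you flag.
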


\begin{proof}
Consider an instance of the stochastic Canadian Travelers problem (CTP).  This consists of a graph $G_{\text{CTP}} = (V,E)$, a cost on each edge $c_{\text{CTP}}: E\to \mathbb{R}_{>0}$, and a probability for each edge $p:E \to [0,1]$, giving the probability $p(e)$ that the edge $e\in E$ is unblocked. The goal is to find a policy that minimizes the expected cost from start to goal.  To reduce this problem to the $T$th task in the LRPP, we create the following instance with a known pmf: We set $G = G_{CTP}$, $c = c_{\text{CTP}}$, and we let $\bar p(i)$ denote the learned function that for each subgraph $G_i = (V, E_i) \in \mathcal{G}$, returns its probability $\mathbb{P}(X_t = i) = p_i$ as
\[
\bar p(i) = p_i = \prod_{e\in E_i} p(e) \prod_{e\in E\setminus E_i} \big(1- p(e)\big).
\]
For each $i$, this function returns a probability in polynomial time, as it requires computing the product of $|E|$ numbers.  Thus, the reduction from CTP to task $T$ of the LRPP with a learned function can be performed in polynomial time.

An optimal policy $\pi_T$ for task $T$ minimizes $\mathbb{E}_{X_T}\big[\text{cost}(\pi_T)]$, and thus this policy is also optimal for the stochastic CTP.  Since the stochastic CTP is PSPACE-hard~\cite{Fried2013}, the problem of computing the optimal policy for the $T$th task is also PSPACE-hard.
\end{proof}

Note, the above result characterizes the complexity of only one task in the LRPP, and only in the case when we have access to a function that can return the probability of any given subgraph.  For problems with an unknown input such as the general LRPP, notions of complexity are considerably more subtle; see for example advice complexity~\cite{arora2009computational} or the complexity of trial-and-error~\cite{ivanyos2018complexity}.  However, we believe the above result is sufficient to demonstrate the complexity of the proposed problem.

\subsection{A Special Case of LRPP}

A special case of the LRPP was formally posed by MacDonald et al. \cite{MacDonald2018} as the Reactive Planning Problem (RPP) where there is a single navigation task (i.e., $T = 1$) and the pmf over $X$ is known.  It is assumed that only a small subset $\mathcal{G}'\subset \mathcal{G}$ of the subgraphs have non-zero probability.

The proposed policy in~\cite{MacDonald2018} for the RPP tries to balance the act of taking observations (known as exploration) and moving towards the goal (known as exploitation). At each step, the policy finds the next observation vertex that minimizes a product of the expected entropy of taking an observation and a linear combination of the travel cost to the observation vertex, the cost of observation and the expected cost from the observation vertex to the goal. The proposed policy is complete and is represented as a decision tree where the size of the representation is polynomial in the number of the subgraphs in $\mathcal{G}'$ and the number of vertices in $V$. Authors in~\cite{MacDonald2018} show that a complete policy exists for all start-to-goal navigation tasks in undirected graphs.

For more details on the RPP and the solution outlined in this section, refer to \cite{MacDonald2018}. For the purposes of this paper, the reader only needs to understand that by providing 1) an undirected graph $G$, 2) the start and goal vertices, $v_s$ and $v_g$, 3) a set of subgraphs $\mathcal{G}'$, and 4) a corresponding pmf over the subgraphs, a solution to the RPP will return a complete policy for $\mathcal{G}'$.

%----------------------------------------------------------------------
\section{LRPP Solution Approach}
\label{subsec:LRPPsol}
%----------------------------------------------------------------------

There are three key challenges to address when considering the approach in \cite{MacDonald2018} to solve the LRPP. First, in the LRPP the pmf over the subgraphs is unknown. To address this, in Section~\ref{subsubsec:mapfilter},  we propose a Map Memory Filter to add a new map $M_t$ to the set of collected maps $\mathcal{M}_t$ and estimate the pmf. Second, while executing a task, the robot might encounter an environment that it has not experienced before. To handle such situations, we introduce the idea of switching to an optimistic policy in Section~\ref{subsubsec:lambda}. Finally, the goal is to minimize the expected cost of executing a sequence of tasks by learning a model of the environment. In Section~\ref{subsubsec:policyupd} we explain how the policy generating algorithm proposed by \cite{MacDonald2018} can be utilized to generate and update a complete policy $\pi_t$ based on changes to the estimated pmf. Figure~\ref{fig:lrpp_flow_chart} summarizes our proposed solution to the LRPP.

\begin{figure}
    \centering
    \begin{tikzpicture}[nodes= {align = center}, >=Stealth]
        \node [root, scale=0.8] (s) at (0,2) {Start\\task};
        \node [decision, scale=0.8,inner xsep=0] (1) at (3,2) {Observations\\Consistent\\with $\mathcal{M}_{t-1}$};
        \node [process, scale=0.8,inner xsep=0] (2) at (5,0.6) {Switch to\\optimistic};
        \node [finish, scale=0.8] (g) at (6,2) {Terminate};
        
        \node [process, scale=0.8] at (4.5,-0.5) (mapfilter) {Filter\\Maps};
        \node [process, scale=0.8] at (1.5,-0.5) (buildpol) {Build RPP\\Policy};
        
        \draw [thick, ->] (s) -- node[near start,above,font=\scriptsize]{$\pi_t$} (1);
        \draw [thick, ->] (1) -- node[near start,above,font=\scriptsize]{Yes} (g);
        \draw [thick, ->] (1.south) |- node[near start, right, font=\scriptsize]{No} (2);
        \draw [thick, ->] (2) -| (g.south);
        \draw [thick, ->] (g.east) -- (7,2) |- node[near end,above,font=\scriptsize]{$M_t$} (mapfilter);
        \draw [thick, ->] (mapfilter) -> node[midway,above,font=\scriptsize]{$\mathcal{M}_t$} (buildpol);
        \draw [thick, ->] (buildpol)  -| node[near end,right,font=\scriptsize]{$\pi_{t+1}$} (s);
    \end{tikzpicture}
    \caption{Flow chart of the proposed algorithm for the LRPP}
    \label{fig:lrpp_flow_chart}
\end{figure}
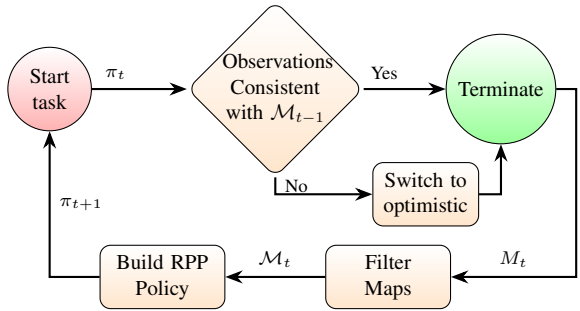

%-----------------------------------------------------------------
\subsection{Map Memory Filter} \label{subsubsec:mapfilter}
%-----------------------------------------------------------------
After the robot performs $n$ actions within the environment, let $E_{t,n}\subseteq E$ denote the set of edges with a known state. We define the robot's understanding, or \emph{map} of $G_{x_t}$, after its $n$th action, as the tuple $M_{t,n} = (E_{t,n}^{\text{b}},E_{t,n}^{\text{u}})$ with known blocked edges $E_{t,n}^{\text{b}} = \{e\in E_{t,n}| e\not\in E_{x_t}\}$ and known unblocked $E_{t,n}^{\text{u}} = \{e\in E_{t,n}| e\in E_{x_t}\}$. Note that $E_{t,n}^{\text{b}}$ and $E_{t,n}^{\text{u}}$ form a partition of $E_{t,n}$. When the task is finished, the robot stores the map in the list $\mathcal{M}_t = [M_1,\ldots,M_t]$, where $n$ is removed to indicate the task is completed.

With the map defined, we introduce a filter to reduce the amount of map information stored in memory and subsequently the search space of the policy construction algorithm. We only store maps that do not \emph{agree}, which means that each stored map corresponds to a different realization of $G$. A formal definition of the map agreement is given below.

\begin{definition}[Map Agreement]
Given maps $M_1$ and $M_2$, we say $M_2$ agrees with $M_1$ if $E_2^{\text{b}}\cap E_1^{\text{u}}=\emptyset$ and $E_2^{\text{u}}\cap E_1^{\text{b}}= \emptyset$.
\label{def:agreement}
\end{definition}
Figure~\ref{fig:mapEx} shows three maps obtained while executing tasks in different realizations. Solid edges represent edges that are observed and traversable, dashed edges represent unobserved edges and the lack of an edge represents an observed and blocked edge. Map (a) \emph{agrees} with Map (c), but not Map (b) because edge $(1,2)$ is unblocked in Maps (a) and (c) and blocked in Map (b).

Note that the robot does not need to know the whole environment to accomplish its task and can leave regions unmapped, which may result in different realizations seeming identical to the robot. 
\begin{figure}
    \begin{subfigure}{.32\linewidth}
        \begin{center}
            \begin{tikzpicture}
            \begin{scope}[every node/.style={circle,draw}]
        	\node (1) [align=center] at (-.5,-.5) {$1$};
            \node (2) [align=center] at (0,.5) {$2$};
            \node (3) [align=center] at (.5,-.5) {$3$};
            \end{scope}
            \begin{scope}[every node/.style={circle},
                          every edge/.style={draw=black}]
                \path [-,every node/.style={sloped,anchor=south,auto=false}] (1) edge[bend left=0] node[above] {} (2);
                \path [-,every node/.style={sloped,anchor=south,auto=false}] (2) edge[bend left=0,dashed] node[above] {} (3);
                \path [-,every node/.style={sloped,anchor=south,auto=false}] (3) edge[bend left=0,dashed] node[below] {} (1);
            \end{scope}
    \end{tikzpicture}
        \end{center}
    \caption{}
    \label{fig:mapEx_a}
\end{subfigure}
\begin{subfigure}{.32\linewidth}
        \begin{center}
            \begin{tikzpicture}
        \begin{scope}[every node/.style={circle,draw}]
        	\node (1) [align=center] at (-.5,-.5) {$1$};
            \node (2) [align=center] at (0,.5) {$2$};
            \node (3) [align=center] at (.5,-.5) {$3$};
        \end{scope}
        \begin{scope}[every node/.style={circle},
                      every edge/.style={draw=black}]
            \path [-,every node/.style={sloped,anchor=south,auto=false}] (2) edge[bend left=0] node[above] {} (3);
            \path [-,every node/.style={sloped,anchor=south,auto=false}] (3) edge[bend left=0] node[below] {} (1);
        \end{scope}
    \end{tikzpicture}
        \end{center}
        
    \caption{}
    \label{fig:mapEx_b}
\end{subfigure}
\begin{subfigure}{.32\linewidth}
    \begin{center}
        \begin{tikzpicture}
    \begin{scope}[every node/.style={circle,draw}]
        	\node (1) [align=center] at (-.5,-.5) {$1$};
            \node (2) [align=center] at (0,.5) {$2$};
            \node (3) [align=center] at (.5,-.5) {$3$};
    \end{scope}
    \begin{scope}[every node/.style={circle},
                  every edge/.style={draw=black}]
        \path [-,every node/.style={sloped,anchor=south,auto=false}] (1) edge[bend left=0] node[above] {} (2);
        \path [-,every node/.style={sloped,anchor=south,auto=false}] (2) edge[bend left=0] node[above] {} (3);
        \path [-,every node/.style={sloped,anchor=south,auto=false}] (3) edge[bend left=0] node[below] {} (1);
    \end{scope}
    \end{tikzpicture}
    \end{center}
    \caption{}
    \label{fig:mapEx_c}
\end{subfigure}
    \caption{Example of map agreement}
    \label{fig:mapEx}
\end{figure}
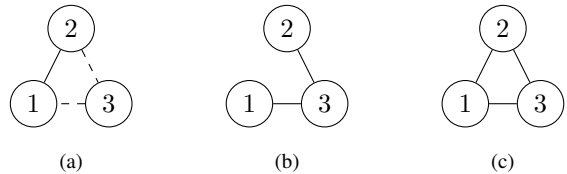

To minimize the number of maps stored, we introduce a compact representation of maps that agree with each other, which we call \emph{super maps}.

\begin{definition}[Super Maps]
A map $M_j$ with $j\in \mathbb{N}_t$ is a super map if all $M_i$ for $j\neq i\in\mathbb{N}_t$ that agree with $M_j$ satisfy $E_i^{\text{b}}\subseteq E_j^{\text{b}}$ and $E_i^{\text{u}}\subseteq E_j^{\text{u}}$.
\end{definition}

 The problem of computing a minimal set of super maps can be formalized as follows.

\begin{problem}[Map Merging Problem]
    Given a set of collected maps from each task, $\mathcal{M}_T = [M_1, M_2, \ldots, M_T]$, find a minimum partition of $\mathcal{M}_T$ such that every map in each subset agree with each other.
\end{problem}

We define merging a set of maps that are in agreement with each other as the union of the blocked and unblocked edges in the maps. Formally, let $M_i$ and $M_j$ be two agreeing maps, then the map obtained from merging them is $(E^b_i \cup E^b_j, E^u_i \cup E^u_j)$. Observe that merging the maps in a subset forms a super map, and thus the solution to the Map Merging Problem provides a compressed representation of the robot's past experiences. We redefine $\mathcal{M}_t$ as the set of \emph{super maps} at the end of task $t$.

Prior to providing our algorithm for the Map Merging Problem, we show that this problem is NP-hard by a reduction from the Minimum Graph Coloring (MGC) problem. Given a graph, the MGC problem is the problem of assigning a color to each vertex such that no two incident vertices have the same color and the total number of colors is minimized. The MGC problem is a well-known NP-hard problem~\cite{karp1972reducibility}.

\begin{lemma}
The Map Merging Problem is NP-hard.
\end{lemma}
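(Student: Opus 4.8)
The plan is to reduce Minimum Graph Coloring (MGC) to the Map Merging Problem. Given an instance of MGC, namely a graph $H = (V_H, E_H)$ on $|V_H| = \ell$ vertices, I would construct a set of maps $\mathcal{M}$, one map per vertex of $H$, over a suitably chosen base graph $G$, so that two maps \emph{agree} (in the sense of Definition~\ref{def:agreement}) if and only if the corresponding vertices of $H$ are \emph{non}-adjacent. Then a subset of maps all mutually agreeing corresponds exactly to an independent set in $H$, and a minimum partition of $\mathcal{M}$ into mutually-agreeing subsets corresponds exactly to a minimum partition of $V_H$ into independent sets, i.e. a minimum proper coloring of $H$. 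Since this construction is polynomial and MGC is NP-hard, NP-hardness of the Map Merging Problem follows.

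**The construction.**

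The key gadget is to encode each edge $\{a,b\} \in E_H$ by a dedicated edge $e_{ab}$ of the base graph $G$, and to set the maps so that they \emph{disagree} precisely on that coordinate. Concretely, let $G$ have one edge $e_{ab}$ for every $\{a,b\}\in E_H$ (vertices of $G$ can be added arbitrarily to make these into a legal edge set; their exact placement is irrelevant since agreement only depends on the blocked/unblocked labels). For the map $M_a$ corresponding to vertex $a \in V_H$: for each edge $\{a,b\}$ incident to $a$ in $H$, put $e_{ab}$ into $E^{\mathrm{u}}_a$ if (say) $a < b$ in a fixed ordering, and into $E^{\mathrm{b}}_a$ if $a > b$; leave all other edges of $G$ unobserved in $M_a$. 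Then for a non-edge $\{a,b\}\notin E_H$, the observed-edge sets of $M_a$ and $M_b$ are disjoint, so they trivially agree; for an edge $\{a,b\}\in E_H$, the edge $e_{ab}$ is unblocked in one of $M_a, M_b$ and blocked in the other, so $E^{\mathrm{b}}_?\cap E^{\mathrm{u}}_? \neq \emptyset$ and the maps disagree. Thus the "conflict graph" on $\mathcal{M}$ — two maps joined iff they disagree — is isomorphic to $H$, and partitioning $\mathcal{M}$ into mutually-agreeing classes is exactly coloring $H$.

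**Main obstacle and remaining checks.**

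The main obstacle is not the reduction idea but the two correctness directions and making sure the agreement relation behaves as a conflict graph rather than something subtler. Specifically, I must verify: (i) a set of maps is \emph{pairwise} agreeing iff it forms an independent set in $H$ — the "iff" here is clean because agreement, as defined, is a purely pairwise condition on disjointness of blocked/unblocked label sets, so no higher-order obstruction arises; and (ii) the optimum values coincide, i.e. $\mathrm{OPT}_{\text{MapMerging}}(\mathcal{M}) = \chi(H)$, which follows immediately from (i) since minimum partitions into independent sets is the definition of chromatic number. I would also note that the edges $e_{ab}$ are genuinely distinct edges of $G$ (so no two conflicts get conflated) and that $|E(G)| = |E_H|$ and $|\mathcal{M}| = |V_H|$, giving a polynomial-size instance. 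One small care point worth stating explicitly: if $H$ has isolated vertices or is edgeless, the reduction still works (every map agrees with every other, one super map suffices, $\chi(H)=1$). With these checks in place the lemma follows.
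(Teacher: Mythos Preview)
Your proposal is correct and takes essentially the same approach as the paper: both reduce from Minimum Graph Coloring by creating one map per vertex and using a dedicated base-graph edge for each edge of the coloring instance to force disagreement exactly on adjacent pairs, then identifying mutually-agreeing subsets with independent sets. The only cosmetic difference is that the paper takes the complete graph on $V_H$ as the base map and additionally marks non-edges of $H$ as unblocked in both endpoint maps, whereas you omit those redundant observations; this does not change the argument.
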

\begin{proof}
Consider a graph $G = (V, E)$ as an instance of the MGC problem. For each vertex $v \in V$, we construct a map $M_v$ such that if $(u, v) \in E$ then $M_u$ and $M_v$ do not agree with each other. The construction of these maps is given below.
Consider a complete graph $G'  = (V, E')$ as the base map, then we have:
\begin{enumerate}
    \item For all $u \in V$ and for all $w, v \not= u$, the edge $(w, v) \in E'$ is an unknown edge in the map $M_u$,
    \item for all $(u, v) \not \in E$, the edge $(u, v)$ is unblocked in both $M_u$ and $M_v$; and finally
    \item for all $(u, v) \in E$, the edge $(u, v)$ is blocked in $M_u$ and is unblocked in $M_v$.
\end{enumerate}
Let $S = \{S_1, \ldots, S_k\}$ be the minimum size partition of the constructed Map Merging problem such that the maps in the subsets agree with each other. By the construction of the instance, two maps $M_u$ and $M_v$ agree with each other if and only if $(u, v) \not \in E$. Hence, for all $M_u, M_v \in S_i$, the edge $(u, v) \not \in E$. Therefore, the corresponding vertices in each subset do not share any edge and these vertices can be colored with the same color. Therefore, the graph $G$ can be colored with $k$ colors. Now suppose there exists a coloring of  $G$ with $k' < k$ colors, then by the construction of the Map Merging Problem instance, there is a partition $S'$ of maps with size $k'$ such that the maps in each subset of the partition agree with each other. This is a contradiction.   
\end{proof}

Observe that the maps in the set $\mathcal{M}_T$ are added sequentially with each task execution and so the online map merging problem is a form of online graph coloring~\cite{Gramm2009}.  Algorithm~\ref{algo:mapFilter} details our map merging method, which is a greedy approach that immediately adds a map to a subset via merging if it agrees with an existing super map. It is analogous to the First Fit approach to the online graph coloring problem, which, while not an approximation algorithm~\cite{Vishwanathan1992}, provides good performance in practice. The algorithm takes the new map $M_t$ and the previous super maps $\mathcal{M}_{t-1}$ as inputs. The function $\textsc{AgreeingMaps}$ finds the subset of super maps in $\mathcal{M}_{t-1}$ that \emph{agree} with $M_t$ according to Definition~\ref{def:agreement}. If there is no such super map, then we add the new map to the set of super maps (Line 3). If the set of agreeing super maps $S$ is not empty, then the function \textsc{BestMapToMerge} finds the best super map in $S$ to merge $M_t$ with. The simplest definition of this function is to merge $M_t$ with the first agreeing map in $S$. A more sophisticated version of this function is to find the super map that adds the minimum number of additional blocked edges to the super map upon merging, i.e.,
\[
\argmin_{M_j \in S} |E^b_j \cup E^b_t| - | E^b_j|. 
\]  
If there are multiple minimizers, break the tie by randomly selecting one of them. Observe that large numbers of blocked edges in the super maps may result in conservative policies by the robot in future task executions. Therefore, the method proposed in \textsc{BestMapToMerge} favors exploration by greedily minimizing the number of blocked edges in the collected super maps.

\begin{algorithm}
\KwIn{$M_t$, $\mathcal{M}_{t-1}$}
\KwOut{$\mathcal{M}_{t}$}
$S \leftarrow $\textsc{AgreeingMaps}($\mathcal{M}_{t-1}$, $M_t$) \Comment{$S \subseteq \mathcal{M}_{t-1}$}\\
\If{$S = \emptyset$}
{
return $\mathcal{M}_{t-1}\cup M_t$\\
}
$M_j \leftarrow $\textsc{BestMapToMerge}($S$, $M_t$) \Comment{$M_j \in S$}\\
$M_j \leftarrow $\textsc{MergeMaps}($M_j$, $M_t$)\\
return $\mathcal{M}_{t-1}$
 \caption{\textsc{MapFilter}}
 \label{algo:mapFilter}
\end{algorithm}

Using this method of storage, the expected cost estimate of each policy can be calculated as
\begin{equation}
\label{eq:eCostSimpleMaps}
\mathbb{E}_X[\text{cost}(\pi_t)] = \sum_{M_j\in\mathcal{M}_t} \bigg(\frac{n_j + 1}{t}\bigg)\mathrm{cost}_{\pi_t}(M_j)\>,
\end{equation}
where $n_j$ is the number of maps the robot has experienced by task $t$ that have been merged with super map $M_j$ and $\text{cost}_{\pi_t}(M_j)$ is the cost of executing policy $\pi_t$ in super map $M_j$. Thus the estimated probability of encountering $M_j$ in the next task is $\hat{p}_{M_j} = (n_j + 1)/t$. Let $\hat{P}=[\hat{p}_{M_0}, \hat{p}_{M_1}, \ldots ,\hat{p}_{M_t}]$ where $M_j \in \mathcal{M}_t$ for all $j$ in $\mathbb{N}_{T}$, forming our estimate of the pmf of $\mathcal{G}$. Note that $\mathcal{M}_0=\{(\emptyset,E)\}$ and initialize $n_0=0$, leading to the robot's initial assumption that all edges in $G$ are unblocked.

%----------------------------------------------------------
\subsection{Switching to an Optimistic Policy} 
%----------------------------------------------------------
\label{subsubsec:lambda}
Recall that an optimistic policy plans the shortest possible path from the robot's current location to the goal given the known map, assuming all unknown edges are unblocked. If the move cannot be completed because of an unexpected blocked edge, then it recomputes and follows the shortest path in the updated map. Formally we define an optimistic policy as follows:

\begin{definition}[Optimistic Policy $\lambda$]
An optimistic policy $\lambda$ computes a sequence of move commands online to lead the robot to the goal state. Such a policy must be complete.
\end{definition}

For a given task $t$, the robot starts by following the preplanned paths in the RPP policy $\pi_t$ until either 1) an obstacle prevents the robot from continuing (in which case the robot is in a new map) or 2) all super maps that are consistent with the robot's observations have no path to the goal $v_g$. In either case, the robot switches to $\lambda$ to finish the task. This satisfies the complete policy requirement, and the policy is updated each time a new task is completed. The preplanned paths are expected to be more efficient at reaching $v_g$ than $\lambda$, and as such the probability of the robot switching to $\lambda$ should be minimized. This policy will be referred to as the RPP-Hybrid policy.

%------------------------------------------------------------
\subsection{Policy Construction and Update} 
\label{subsubsec:policyupd}
%------------------------------------------------------------
A policy for task $t$ can be constructed as a binary tree $\pi_t = (N,L)$ by using the super maps $\mathcal{M}_{t-1}$ and estimated pmf $\hat{P}_{t-1}$ collected thus far by the robot as inputs into the RPP problem. The nodes $N$ of the tree are given by tuples $(Y,v,e)$ for belief $Y=\{i \in \mathcal{M}_{t-1}| M_i \text{ agrees with } M_{t,n}\}$ at vertex $v \in V$. The edge $e$ is an \emph{observation} at vertex $v$. For each node, $L$ contains a path from the parent node to the current node. There are two possible outcomes for each observation, one corresponding to $e$ being unblocked, i.e., $e\in E_{x_t}$ and the other corresponding to $e$ being blocked, i.e., $e \not \in E_{x_t}$. If $e=\emptyset$, then either $v=v_g$ or there is no path to goal in any of the agreeing super maps, resulting in an incomplete policy. The policy can be made complete by augmenting it to allow the robot to switch to the optimistic policy for the remainder of the task.

To use $\mathcal{M}_t$ as an input to the RPP solution, it first needs to be converted into a set of edge subsets, where each subset represents the passable edges in each super map. Since each super map in $\mathcal{M}_{t}$ may only be a partial representation of a realization, it is necessary to make some assumptions to fill in missing information. If the state of an edge in super map $M_j$ is unknown (i.e., $e \notin E^{\text{b}}_j$ and $e \notin E^{\text{u}}_j$), we assume it to be unblocked. Formally, the edge subset for $M_j$ will be $E^{\text{u}}_j \cup (E\setminus (E^{\text{u}}_j\cup E^{\text{b}}_j))$. This choice encourages the robot to explore, as it will attempt to traverse an unknown edge if it is beneficial. 

\begin{figure}
  \centering
  \begin{subfigure}[t]{.49\linewidth}
    \centering
      \includegraphics[width=.7\linewidth]{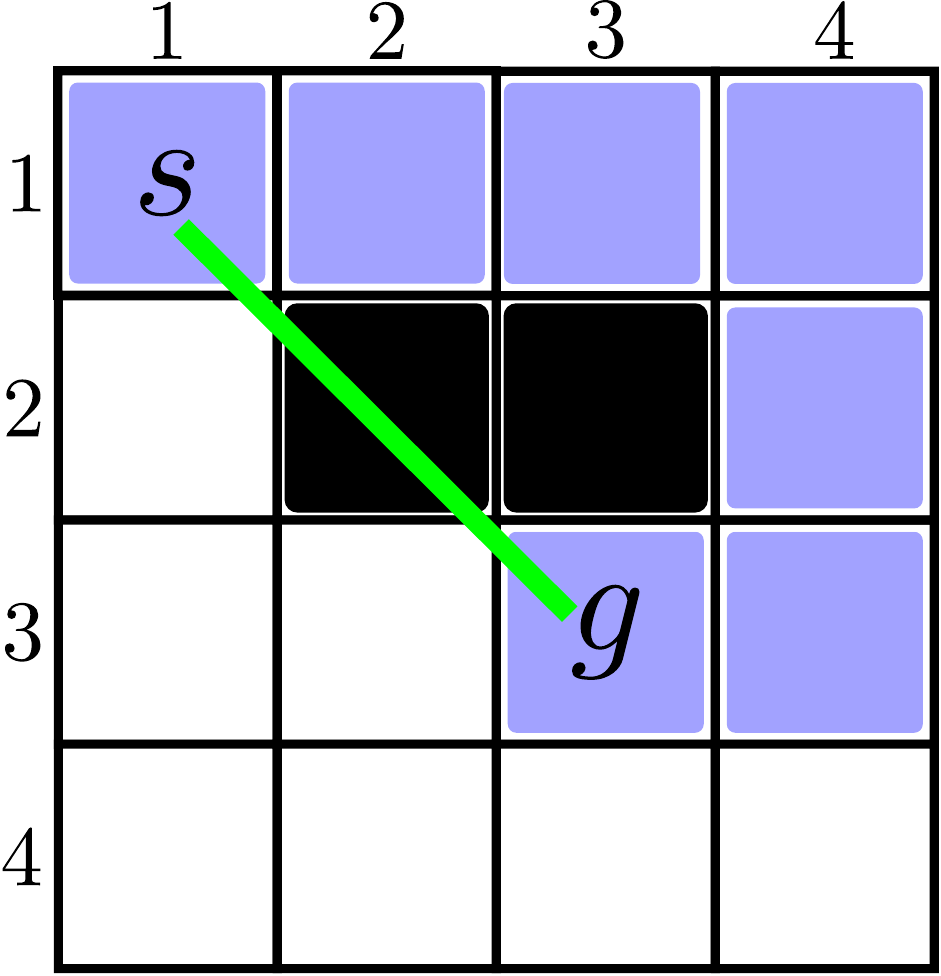}
      \caption{}
  \end{subfigure}
  \begin{subfigure}[t]{.49\linewidth}
    \centering
      \includegraphics[width=.7\linewidth]{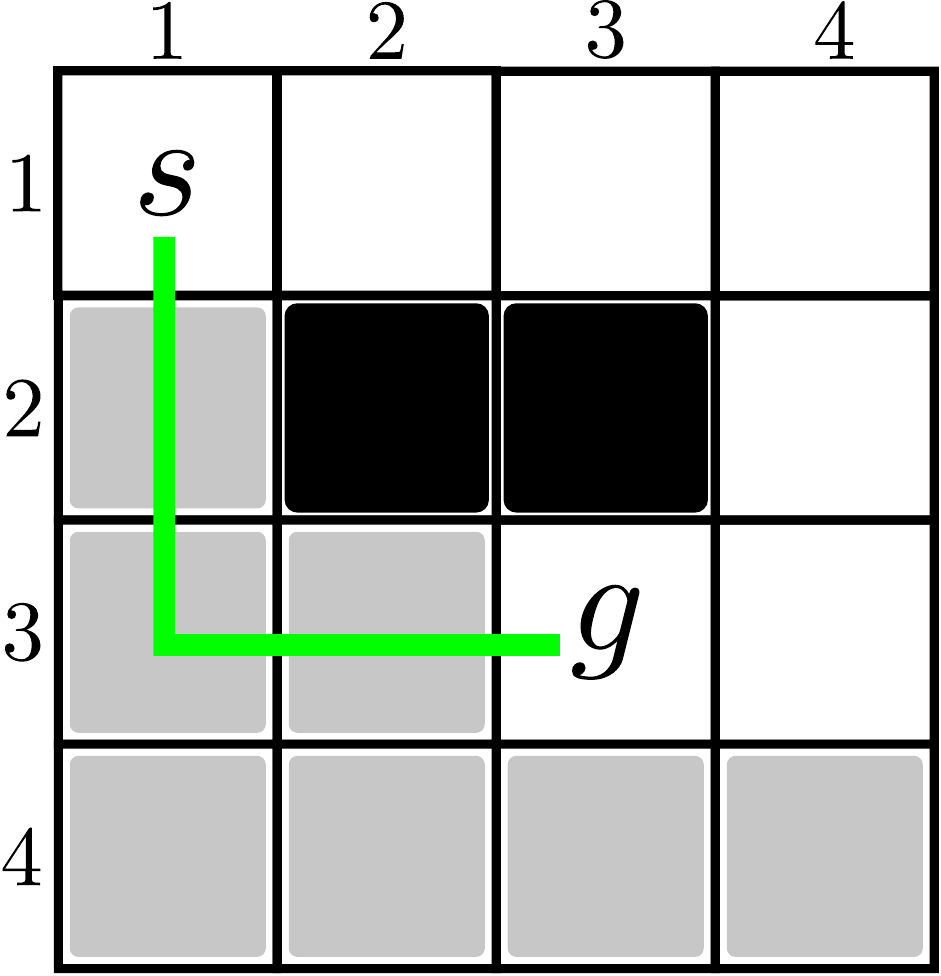}
      \caption{}
      \label{fig:example_map_b}
  \end{subfigure}
  \caption{(a) shows a realization for task $t$, and (b) is the collected $M_t$. The green line in (a) is the path determined by the policy $\pi_t$, in (b) by the policy $\pi_{t+1}$. The blue squares are the path that the robot actually took, the grey squares are unknown.}
  \label{fig:LRPPknownM}
\end{figure}

For example, consider the scenario in Figure~\ref{fig:LRPPknownM}. Assuming the robot only has an empty grid for $M_0$ in $\mathcal{M}_t$, it attempts to execute the task in (a) using the green path. However, it must switch to the optimistic policy, and takes the blue shaded path. The robot stores the map from this execution as $M_1$ (see Figure~\ref{fig:example_map_b}), where the state of the grey squares are unknown. When building the policy, an observation for the edge ((1,1),(2,2)) will be selected, and in the case this edge is blocked, a path will be calculated from $s$ to $g$ in $M_1$ since no other agreeing maps exist. If only the partial map $M_1$ was available, the blue shaded path would be used in the policy. However, since we are assuming the grey squares are unblocked, the algorithm will select the green path in (b). 
Even if that assumption was proven wrong during task execution it will result in more knowledge of the realization, and the next time the policy is built, the algorithm will not repeat the same path for that particular super map.

Finally, we present our entire solution in Algorithm~\ref{algo:updateRule}, which covers task execution and policy construction.
In Line~\ref{ln:initM}, we initialize the set of super maps $\mathcal{M}_t$ with $E$ as a set of unblocked edges. In other words, the robot is aware of all edges that it could potentially move across.  Such information could come from a floor plan of the environment, containing all permanent obstacles. The robot initially assumes that all edges are unblocked. This assumption ensures that the optimistic policy $\lambda$ will always initially attempt the shortest possible path to $v_g$.
In line 3, the policy $\pi_t$ is constructed by the RPP algorithm using the set of super maps $\mathcal{M}_t$, which contains the edge subsets, and the estimated pmf $\hat{P}$.
In lines $5-8$, the robot executes the task by following the policy $\pi_t$ constructed by the RPP algorithm until it reaches a terminal state, updating its set of super maps, along with the estimated pmf and edge subsets, in lines $9$ and $10$ before executing the task again.

\begin{algorithm}
  \KwIn{$G = (V,E)$, $v_s$, $v_g$}
  $\mathcal{M}_0 = [(E,\emptyset)]$\label{ln:initM}\\
 	\For{$t=1,\ldots,T$}{
 	  $\pi_t = \textsc{BuildPolicy}(G, \mathcal{M}_{t-1}$, $\hat{P}_{t-1}$,$v_s$,$v_g$)\label{ln:build}\\
 		Initialize state $\mathcal{R}_{t,n} = (v_s,\emptyset,\emptyset)$ for $n=0$\;
\While{$\mathcal{R}_{t,n}\;\text{\upshape not terminal}$}{
			Execute $\pi_t(\mathcal{R}_{t,n})$\Comment{{if switched policies, wait until $\lambda$ terminates}}\\ 
			Update $\mathcal{R}_{t,n}$\\
			Increment $n$\\
		}
		$M_t= (E_{t,n}^{\text{b}},E_{t,n}^{\text{u}})$ from $\mathcal{R}_{t,n}$\\
		$\mathcal{M}_t$ = \textsc{MapFilter}($M_t$,$\mathcal{M}_{t-1}$)\\
		Update $\hat{P}_t$\\
 	}

\caption{\textsc{SequentialTaskCompletion}}
\label{algo:updateRule}
\end{algorithm}

%----------------------------------------------------------------------
\section{LAMP -- from Theory to Application }
\label{sec:lamp}
%----------------------------------------------------------------------
In this section, we discuss our approach in applying the solution outlined in Section~\ref{subsec:LRPPsol} to realistic scenarios. First, we discuss how the robot interprets its environment as a graph, given a floor plan. Second, we introduce an edge resolving algorithm for determining whether an edge is traversable or not, given an updated costmap. Third, we discuss how the robot executes the RPP-Hybrid policy in a real-world setting where some of the assumptions from the LRPP no longer apply.

\subsection{High Level Idea} \label{sec:lampfwk}

The architecture of our solution, the Learn a Motion Policy (LAMP) framework is shown in Figure~\ref{fig:overall}. The framework is meant to be added to a standard navigation stack. Although the navigation stack in Figure~\ref{fig:overall} is based on the ROS navigation stack~\cite{Marder-Eppstein2010}, others have a similar structure~\cite{Huskic2019}.

The edge observer takes the costmap and the estimated robot's pose from the localization system to resolve edge states (i.e. determine if they are blocked or unblocked). The edge states are given to the high level planner, which uses it to build maps of the environment. The high level planner then constructs a policy between tasks and implements them during task execution by instructing the path planner on which vertex to go to next. 

The local planner is responsible for planning and executing a kinematically feasible path from the robot's current position to the next vertex dictated by the high level planner. There are a myriad of solutions available for these lower level components of a standard navigation stack, and their solutions will not be discussed in this paper.

%----------------------------------------------------------------------
\subsection{Interpreting the Environment} \label{sec:mapframework}
%----------------------------------------------------------------------
Since we are concerned with blocked edges that may significantly alter the route the robot may take, we propose converting a base occupancy grid (taken from a scaled floor plan where only permanent obstacles are marked) into a hybrid topological-metric map. A hybrid map combines a topological graph and an occupancy grid. For more details on the reasoning behind this choice of map structure, please refer to Section 5.2.1 of the thesis written by Tsang \cite{Tsang2020}. Constructing a topological graph from an occupancy grid is not a new idea; algorithms have been proposed by Thrun \cite{Thrun1998}, Liu et al. \cite{Liu2014} and Bl\"{o}chliger et al. \cite{Blochliger2017}.  While we will not be proposing a new algorithm for converting an occupancy grid into a graph, we will briefly cover the process and borrow some terminology from \cite{Blochliger2017}.

The base occupancy grid is decomposed into regions of free space, which we will refer to as \emph{submaps}. For example in an indoor environment, rooms and hallways would become regions. If rooms or hallways are divided into multiple regions, our approach is still valid. In Figure~\ref{fig:navgraph}, these submaps are represented by different coloured areas. The submap boundaries where the robot can cross from one region to another are referred to as \emph{portals} (or \emph{critical lines} in \cite{Thrun1998, Liu2014}), represented by the white rectangles in Figure~\ref{fig:navgraph}. The topological graph is referred to as a \emph{navigation graph} in this work. The navigation graph is a weighted, undirected graph and is defined by $G = (V,E)$ with a cost on each edge $c: E \rightarrow \mathbb{R}_{\geq 0}$. The vertices and edges are defined by the following:
\begin{itemize}
        \item \textbf{Vertices} are the center points of each portal. Thus, each $v \in V$ represents a point $v \in \mathbb{R}^2$.
        Additional vertices that represent an area of interest (ex. start, goal) can be added to the graph.
        
        \item \textbf{Edges} are abstract representations of the connectivity between vertices. If there exists a path from one portal to another in the base occupancy grid without crossing any other portals, then we add an edge between the two portals. We initialize the cost with the minimum distance between the two vertices. 
\end{itemize}

\begin{figure}
    \centering
    \begin{subfigure}[t]{.49\linewidth}
    \centering
    \includegraphics[width=.8\linewidth]{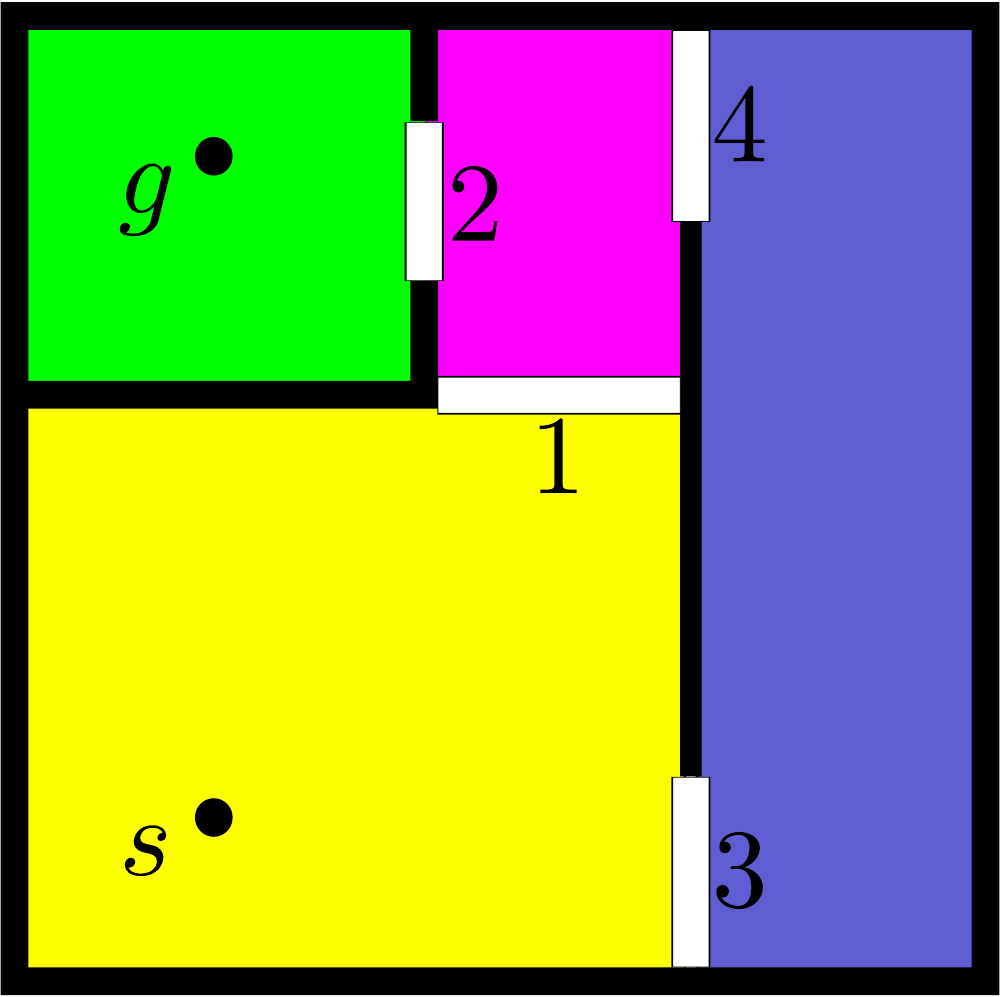}
    \end{subfigure}
    \begin{subfigure}[t]{.49\linewidth}
    \begin{tikzpicture}
\begin{scope}[every node/.style={circle,draw}]
	\node (1) [align=center] at (0,0) {1};
    \node (2) [align=center] at (-1,1) {2};
    \node (3) [align=center] at (1,-1) {3};
    \node (4) [align=center] at (1,1) {4};
    \node (5) [align=center] at (-1,-1) {s};
    \node (6) [align=center] at (-2,1) {g};
\end{scope}
\begin{scope}[every node/.style={circle},
              every edge/.style={draw=black}]
    \path [-,every node/.style={sloped,anchor=south,auto=false}] (1) edge[bend left=0] node[above] {} (2);
    \path [-,every node/.style={sloped,anchor=south,auto=false}] (1) edge[bend left=0] node[above] {} (3);    
    \path [-,every node/.style={sloped,anchor=south,auto=false}] (1) edge[bend left=0] node[above] {} (4);    
    \path [-,every node/.style={sloped,anchor=south,auto=false}] (1) edge[bend left=0] node[above] {} (5);
    \path [-,every node/.style={sloped,anchor=south,auto=false}] (2) edge[bend left=0] node[above] {} (6);
    \path [-,every node/.style={sloped,anchor=south,auto=false}] (2) edge[bend left=0] node[above] {} (4);
    \path [-,every node/.style={sloped,anchor=south,auto=false}] (3) edge[bend left=0] node[below] {} (1);
    \path [-,every node/.style={sloped,anchor=south,auto=false}] (3) edge[bend left=0] node[below] {} (4);
    \path [-,every node/.style={sloped,anchor=south,auto=false}] (3) edge[bend left=0] node[below] {} (5);
\end{scope}
\end{tikzpicture}
    \end{subfigure}
    \caption{Example of occupancy grid decomposition (left) and the resulting navigation graph (right).}
    \label{fig:navgraph}
\end{figure}

We use this navigation graph by instructing the robot to travel to the spatial coordinates of the portals corresponding to the vertices dictated by the policy. Localization is performed using existing algorithms such as AMCL \cite{pfaff2006robust} on the base occupancy grid. 
These algorithms are robust and can still localize reasonably well in the presence of unexpected obstacles. The robot will store the base occupancy grid, submap divisions, and the supermaps, which are subgraphs of the navigation graph. This allows the robot to remember changes in the environment that cause an edge to no longer be traversable, rather than storing a different occupancy grid for each new configuration.
This means that changes to the environment that do not significantly alter the path a robot would take to reach the goal are not stored in memory.

For ease of explanation, we represented a map $M$ as the tuple $(E^b,E^u)$. However, in practice $M$ is stored as an adjacency table representing a graph with edges marked as blocked, unblocked, or unknown. Therefore, we will also use $M$ to represent the navigation graph induced by $(E^b,E^u)$.

One concern is the possibility of creating a multigraph where multiple edges share the same endpoints, such as in Figure~\ref{fig:multigraph}. While this scenario does not need to be avoided, if an implementation allows for the construction of a navigation multigraph, then each edge will need an explicit label instead of representing it with the endpoints $(u,v) \ u,v \in V$, as is done traditionally. This enables the robot to differentiate the edges during planning and traversal.

\begin{figure}
    \centering
    \begin{subfigure}[t]{.49\linewidth}
    \centering
    \includegraphics[width=.7\linewidth]{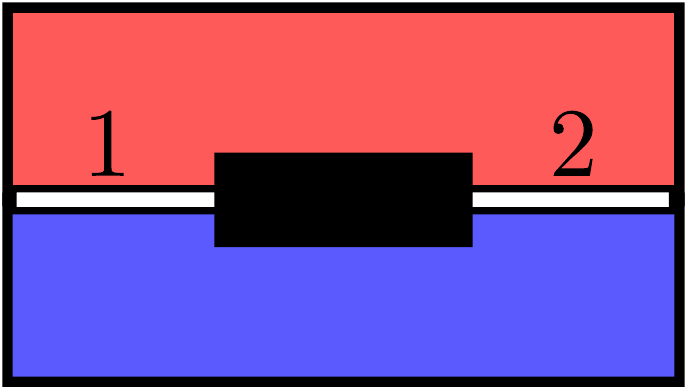}
    \caption{}
    \end{subfigure}
    \begin{subfigure}[t]{.49\linewidth}
    \centering
    \begin{tikzpicture}
\begin{scope}[every node/.style={circle,draw}]
	\node (1) [align=center] at (0,0) {1};
    \node (2) [align=center] at (2,0) {2};
\end{scope}
\begin{scope}[every node/.style={circle},
              every edge/.style={draw=black}]
    \path [-,every node/.style={sloped,anchor=south,auto=false}] (1) edge[bend left=25] node[above] {} (2);
    \path [-,every node/.style={sloped,anchor=south,auto=false}] (1) edge[bend right=25] node[above] {} (2); 
\end{scope}
\end{tikzpicture}
\caption{}
    \end{subfigure}
    \caption{Example of a decomposition that could form a multigraph.}
    \label{fig:multigraph}
\end{figure}

%----------------------------------------------------------------------
\subsection{Resolving Edges} \label{sec:resedges}
%----------------------------------------------------------------------
The robot requires a method to determine if an edge is blocked or unblocked, i.e., to \emph{resolve} the edge.
The up-to-date costmap and the current location of the robot can be utilized to update the edge states of the navigation graph during task execution. An edge can have one of three possible states: blocked, unblocked, or unknown. Resolving an edge means to set the state to be blocked or unblocked. The edges of the navigation graph are set to be unknown at the beginning of every task. While the robot is guaranteed to resolve an edge when it has traversed the entire length of the edge, a few properties of a 2D environment can be exploited to resolve an edge earlier.

\begin{figure}
    \centering
    \begin{subfigure}{.49\linewidth}
    \centering
    \includegraphics[width=.85\linewidth]{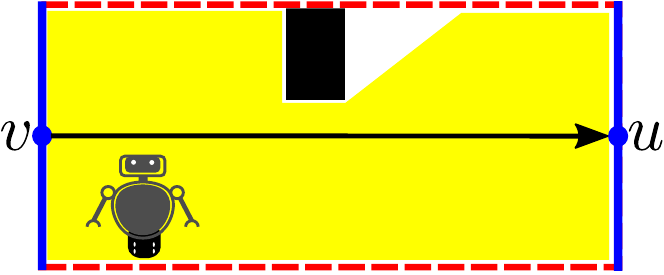}
    \caption{}
    \end{subfigure}
    \begin{subfigure}{.49\linewidth}
    \centering
    \includegraphics[width=.85\linewidth]{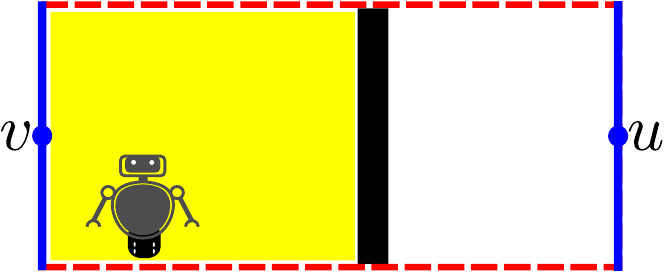}
    \caption{}
    \end{subfigure}
    \begin{subfigure}{.49\linewidth}
    \centering
    \includegraphics[width=.85\linewidth]{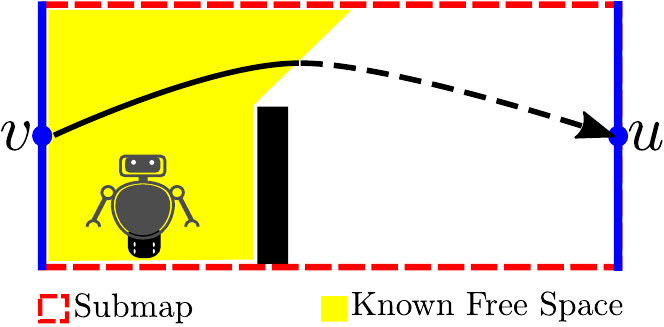}
    \caption{}
    \end{subfigure}
    \caption{Examples of different states for edge $(v,u)$ given what the robot knows: a) unblocked, b) blocked, c) unknown.}
    \label{fig:dir_inf_state}
\end{figure}

Figure~\ref{fig:dir_inf_state} shows a robot trying to traverse 3 different edges. The area that is observed by the robot is shown in yellow, and areas that the robot has not observed are shown in white, with obstacles that the robot observes and cannot traverse shown in black. In (a), the robot can see a path from $u$ to $v$, and so we can resolve the edge to be unblocked. In (b), the robot can see that there is no path from $u$ to $v$ in the submap, so we can resolve the edge as blocked. In (c), there may or may not be a path to goal in the unobserved part of the submap so the state of the edge is unknown.

Formally, let the \emph{true free space} be denoted by $C$ where $C \subseteq \mathbb{R}^2$. This is the area of the environment the robot can occupy without colliding with an obstacle. We assume a disc-shaped robot with a disc-shaped safety region so that orientation does not have to be considered. We also assume $C$ does not change throughout the task execution. Finally, $C$ is hidden from the robot.

Let the \emph{known free space} be denoted by $C_k$ where $C_k \subseteq C$. This is the area of the environment the robot has observed to be free space since the beginning of the task. Let the \emph{unknown space} be denoted by $C_u$ where $C_u \subseteq \mathbb{R}^2$. This is the area of the environment that the robot has not observed yet, but is marked as free space by the base occupancy grid. Note that $C_u \cap C_k = \emptyset$. Finally, let the \emph{optimistic free space} be denoted by $C_o$ where $C_o = C_k \cup C_u$. This is the area of the environment the robot believes it can occupy, assuming unknown space is also free. 
We will also assume that $C \subseteq C_o$, which means obstacles in the base occupancy grid will be assumed to always exist.

Let $S$ denote a submap, then $C(S), C_k(S), C_u(S), C_o(S)$ is the true free space, known free space, unknown space, and the optimistic free space respectively, in submap $S$. Let $\mathbf{x}_R \in \mathbb{R}^2$ be the current location of the robot.
Now a formal definition of an unblocked and blocked edge can be given: 

\begin{definition}[Unblocked and Blocked Edges]
 Given an edge $(u,v)\in E$ where $u,v \in V$, and $S(u,v) \subset \mathbb{R}^2$ is the submap associated with edge $(u,v)$, the edge is \emph{unblocked} if and only if there exists a path, $P \subseteq C(S(u, v))$ from $u$ to $v$. Otherwise the edge is \emph{blocked}.
\end{definition}

\begin{lemma} \label{lemma:bothvisible}
An edge $(u,v)$ is unblocked if there exists a path from $u$ to $v$ in $C_k(S(u, v))$. 
\end{lemma}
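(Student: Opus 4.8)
The plan is to show that a path in the known free space $C_k(S(u,v))$ is in particular a path in the true free space $C(S(u,v))$, so that the hypothesis of the lemma implies the condition of the definition of an unblocked edge. The key observation is simply the inclusion $C_k \subseteq C$ established in the paragraph defining these sets (the known free space is, by definition, a region the robot has \emph{observed} to be free, hence a subset of the true free space $C$). Restricting to the submap $S(u,v)$ preserves the inclusion, so $C_k(S(u,v)) \subseteq C(S(u,v))$.

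From here the argument is immediate: suppose there exists a path $P$ from $u$ to $v$ with $P \subseteq C_k(S(u,v))$. Then $P \subseteq C_k(S(u,v)) \subseteq C(S(u,v))$, so $P$ is also a path from $u$ to $v$ lying entirely in $C(S(u,v))$. By the Definition of Unblocked and Blocked Edges, the existence of such a path means precisely that $(u,v)$ is unblocked. This proves the claim.

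I do not expect a genuine obstacle here; the lemma is essentially unwrapping the definitions together with the set inclusion $C_k \subseteq C$. The only point requiring a word of care is to make explicit that $C_k(S(u,v)) \subseteq C(S(u,v))$ follows from $C_k \subseteq C$ by intersecting both sides with the submap region $S(u,v)$ (equivalently, that the ``restriction to $S$'' operation is monotone with respect to inclusion), and to note that $u$ and $v$ remain the endpoints of $P$ regardless of which ambient free-space set we view $P$ as lying in. One might also remark, for context, that the converse of the lemma is false in general — an edge can be unblocked while the robot has not yet observed a connecting path, which is exactly the ``unknown'' case in Figure~\ref{fig:dir_inf_state}(c) — but this is not needed for the proof itself.
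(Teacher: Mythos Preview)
Your proposal is correct and follows exactly the same approach as the paper: both arguments use the inclusion $C_k(S(u,v)) \subseteq C(S(u,v))$ to conclude that a path in the known free space is already a path in the true free space, hence the edge is unblocked by definition. Your write-up is somewhat more detailed (making explicit the monotonicity of restriction and the remark on the converse), but the underlying one-line argument is the same.
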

\begin{proof}
    Let $P(u,v)$ denote the path in $C_k(S(u, v))$. Since $C_k(S(u, v)) \subseteq C(S(u, v))$, then it follows that $P(u,v) \subseteq C(S(u, v))$.
\end{proof}

\begin{lemma} \label{lemma:pathblocked}
An edge $(u,v)$ is blocked if there does not exist a path from $u$ to $v$ in $C_o(S(u,v))$.
\end{lemma}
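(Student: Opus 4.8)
The plan is to prove the contrapositive: if the edge $(u,v)$ is not blocked — that is, it is unblocked — then there exists a path from $u$ to $v$ in $C_o(S(u,v))$. By the definition of an unblocked edge, there is a path $P \subseteq C(S(u,v))$ from $u$ to $v$. So it suffices to show $C(S(u,v)) \subseteq C_o(S(u,v))$, since then $P \subseteq C_o(S(u,v))$ is the required path, contradicting the hypothesis that no such path exists.

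First I would recall the standing assumption introduced just before the definition of unblocked/blocked edges, namely $C \subseteq C_o$, which says that every point of the true free space lies in the optimistic free space (equivalently, obstacles marked in the base occupancy grid are always genuine obstacles). Restricting attention to the submap $S(u,v)$, this immediately gives $C(S(u,v)) \subseteq C_o(S(u,v))$: the true free space within the submap is contained in the optimistic free space within the submap. Hence the path $P$ witnessing that $(u,v)$ is unblocked also lies in $C_o(S(u,v))$.

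Putting the pieces together: assume for contradiction that $(u,v)$ is unblocked while no path from $u$ to $v$ exists in $C_o(S(u,v))$. Unblockedness yields a path $P \subseteq C(S(u,v)) \subseteq C_o(S(u,v))$ from $u$ to $v$, contradicting the assumed nonexistence. Therefore $(u,v)$ must be blocked, which is exactly the statement of the lemma.

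The argument is essentially a one-line set inclusion once the right assumption ($C \subseteq C_o$, localized to the submap) is invoked, so there is no real obstacle; the only subtlety worth stating explicitly is that the inclusion $C \subseteq C_o$ must be used at the level of the submap $S(u,v)$ rather than the whole environment, and that $P$ being a valid path (connected, avoiding obstacles) is preserved under the inclusion since it is a purely set-membership condition on the points of $P$.
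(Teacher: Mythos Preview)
Your proof is correct and follows essentially the same approach as the paper: both argue by contradiction/contrapositive, invoking the standing assumption $C \subseteq C_o$ restricted to the submap $S(u,v)$ to conclude that any path $P \subseteq C(S(u,v))$ witnessing unblockedness must also lie in $C_o(S(u,v))$. The paper's version phrases the key step as ``there exists $\mathbf{x} \in C(S(u,v))$ with $\mathbf{x} \notin C_o(S(u,v))$, contradicting $C(S(u,v)) \subseteq C_o(S(u,v))$,'' but this is the same set-inclusion argument you give.
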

\begin{proof}
    By the definition of an unblocked edge, suppose there is a path $P(u,v) \subseteq C(S(u,v))$ but there does not exist a path from $u$ to $v$ in $C_o(S(u,v))$. This implies there exists $\mathbf{x} \in C(S(u,v))$ where $\mathbf{x} \notin C_o(S(u,v))$. But that is a contradiction since $C(S(u, v)) \subseteq C_o(S(u, v))$. Therefore, $P(u,v) \subseteq C_o(S(u,v))$.
\end{proof}

Algorithm~\ref{algo:dirInf} exploits these properties to resolve edge states. This algorithm is continuously run at a user-specified rate because $C_k(S)$, $C_u(S)$ and $C_o(S)$ are dependent on the time passed since the beginning of the task. In practice, the maximum rate of the algorithm depends on the computation time of the algorithm used to find a path in lines \ref{line:bothvis} and \ref{line:findpath}.

To avoid checking edges that are not immediately relevant to the robot, an edge has to meet at least one of the following conditions to be observed (line \ref{line:edgecheck}):
\begin{enumerate}
    \item \label{connAlgo:cond1} The edge is in the same submap that the robot is currently positioned in.
    \item \label{connAlgo:cond2} At least one of the endpoints of the edge are within the maximum observation range of the robot at $\mathbf{x_R}$.
\end{enumerate}

The maximum observation range of the robot is a circle with a radius of $r_{\max}$, which is based on sensor parameters and costmap implementation.

Condition~\ref{connAlgo:cond1} forces the algorithm to check the states of all the edges in the current submap. If this condition was not in place, the robot would not resolve edges in submaps that were larger than the maximum observation range. However, if the submap is smaller than the maximum observation range, then only checking edges in that submap would be short-sighted if the robot is capable of resolving edges in other submaps that are close by. Condition \ref{connAlgo:cond2} takes advantage of the robot's maximum observation range if possible.

\begin{algorithm}
  \KwIn{$C_k$, $C_u$, $r_{\max}$, $\mathbf{x_R}$}
    $C_o = C_k\cup C_u$\;
    $S_{\text{curr}} = $ \textsc{CurrentSubmap}($x_R$)\;
    \For{each $(u,v) \in E$}{
        \If{$||\mathbf{x_R}-u||_2 \leq r$ OR $||\mathbf{x}_R-v||_2 \leq r$ OR $S(u,v)=S_{\text{curr}}$}{ \label{line:edgecheck}
            \If{$u$ or $v \notin C_o(S(u,v))$}{ \label{line:inobst}
                Set $(u,v)$ as blocked\;
            }
            \ElseIf{\text{there exists} $P(u,v) \subseteq C_k(S(u,v))$}{ \label{line:bothvis}
                Set $(u,v)$ as unblocked\;
            }
            \ElseIf{\text{there does not exist} $P(u,v) \subseteq C_o(S(u,v))$}{ \label{line:findpath}
                Set $(u,v)$ as blocked\;
            }
        }
    }
 \caption{edgeResolver}
 \label{algo:dirInf}
\end{algorithm}

Line~\ref{line:inobst} checks if either of the endpoints of edge $(u,v)$ are in an obstacle; if they are in an obstacle, then the edge is marked as blocked. In chapter 5 of~\cite{Tsang2020}, an extension of edge resolution is presented to be more robust by considering the entire length of a portal instead of just the center point.

Line~\ref{line:bothvis} exploits Lemma~\ref{lemma:bothvisible} to set $(u,v)$ as unblocked.  In line~\ref{line:findpath}, a path planner is run on $C_o(S(u,v))$ to find a path from $u$ to $v$. If no path is found, then the edge is blocked. When selecting a path planner, it only needs to return if a path exists or not, thus optimality is not a concern. Minimizing the computation speed should be the primary objective. Planners that use heuristics (such as A*) to speed up computation are good options.

If a path is found, but $u$ and $v$ do not satisfy the conditions in lines~\ref{line:inobst} or~\ref{line:bothvis}, this implies the robot still does not know whether the edge is blocked or unblocked, since the path could be blocked by an unseen obstacle, as illustrated by (c) in Figure~\ref{fig:dir_inf_state}. Then edge $(u,v)$ cannot be resolved and the algorithm moves on to checking the next edge.

Edges resolved during task $t$ are used to update the map $M_t=(E^b_t,E^u_t)$. Computationally fast path planning is crucial to running edge resolver at the desired rate. Since the robot's current location is an input to Algorithm~\ref{algo:dirInf}, if the path finder (lines~\ref{line:bothvis} and~\ref{line:findpath}) is slow, it can result in edges not being resolved because the robot has moved far away by the time Algorithm~\ref{algo:dirInf} is re-run. Large submaps may also slow down the the rate of the edge resolver as most path finders will take longer to run on large maps.

\begin{figure*}
    \begin{subfigure}[t]{.16\linewidth}
    \begin{center}
        \includegraphics[width=.9\linewidth]{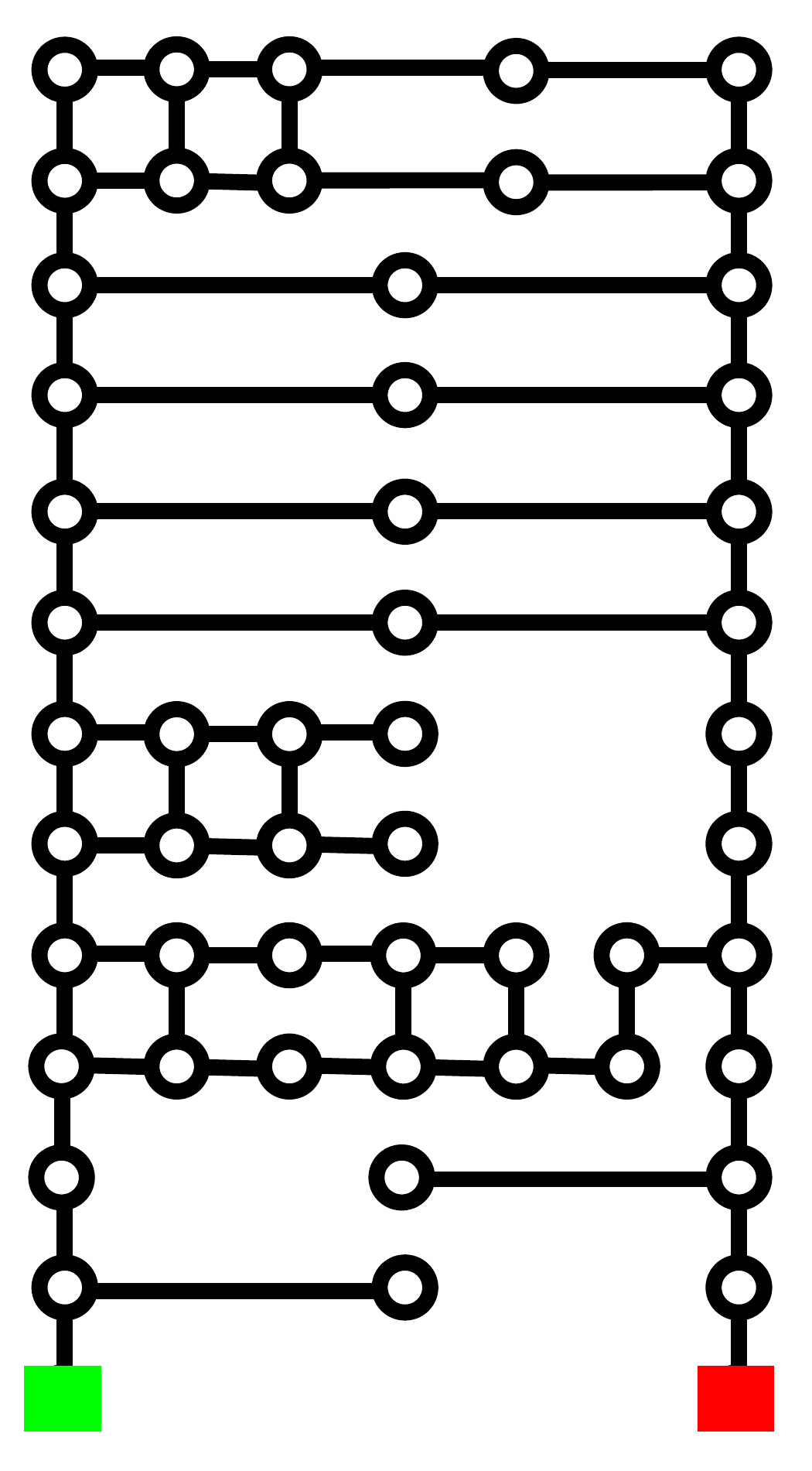}
    \end{center}
    \caption{}
    \label{subfig:grocery1}
    \end{subfigure}
    \begin{subfigure}[t]{.16\linewidth}
    \begin{center}
        \includegraphics[width=.9\linewidth]{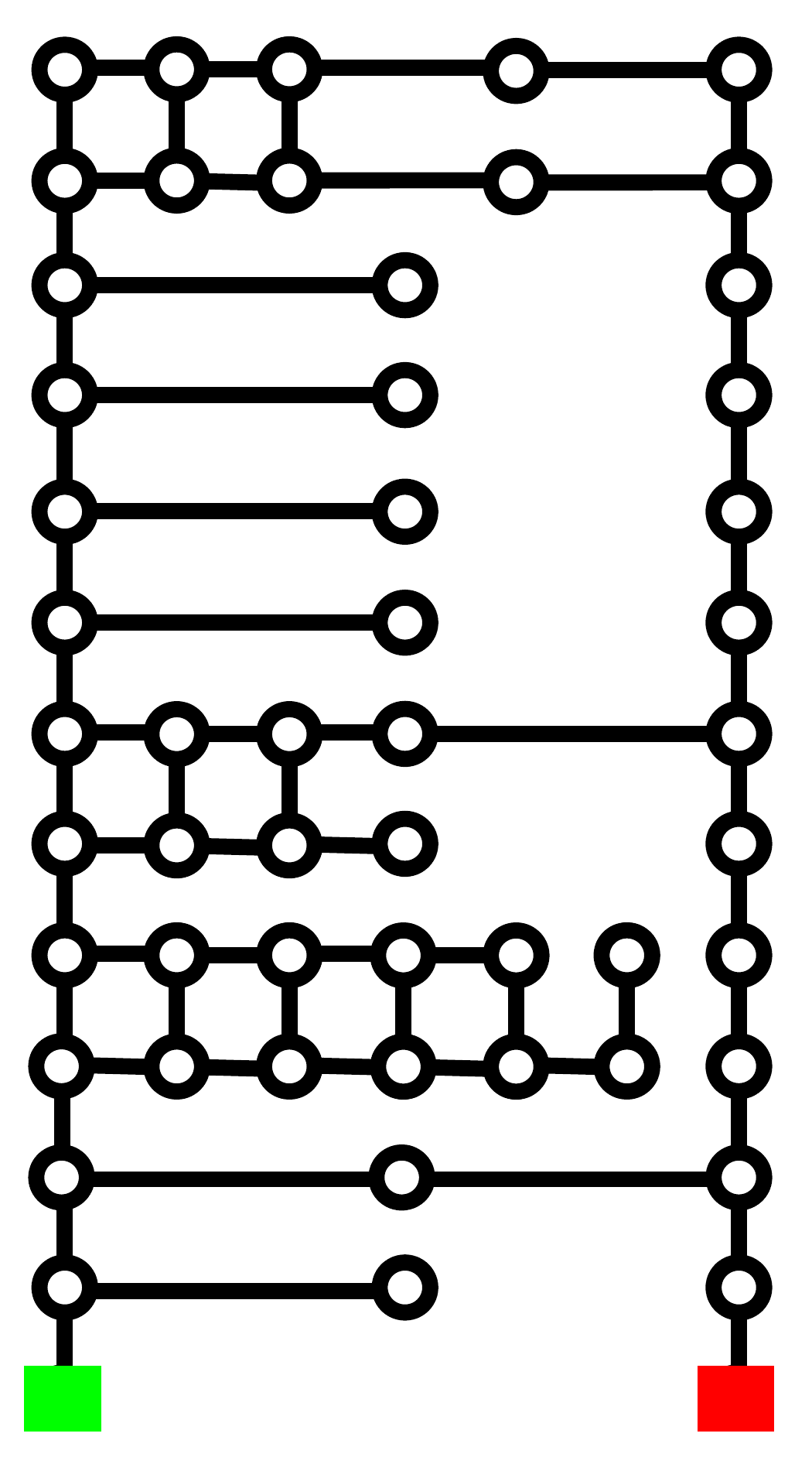}
    \end{center}
    \caption{}
    \label{subfig:grocery2}
    \end{subfigure}
    \begin{subfigure}[t]{.16\linewidth}
    \begin{center}
        \includegraphics[width=.9\linewidth]{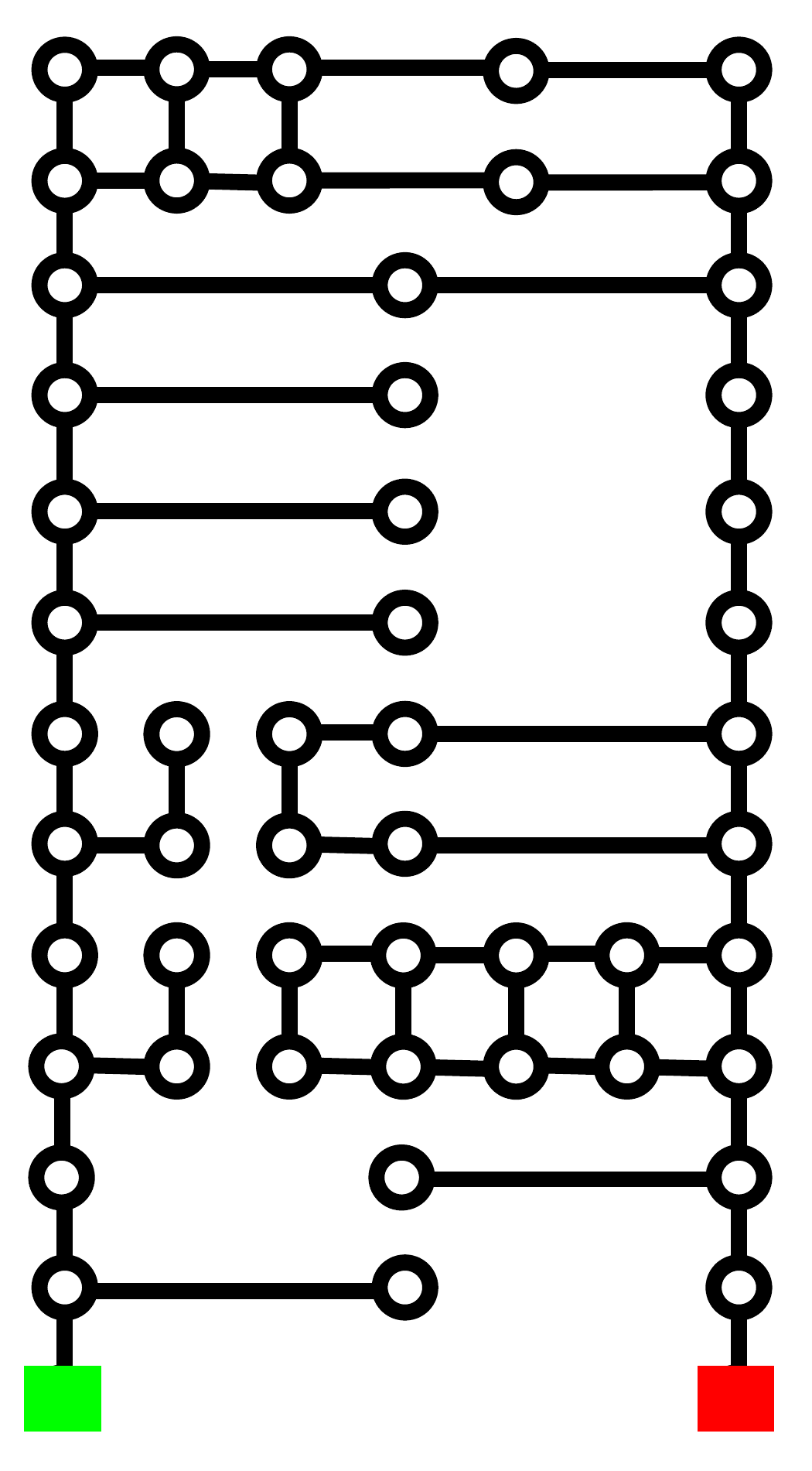}
    \end{center}
    \caption{}
    \label{subfig:grocery3}
    \end{subfigure}
    \begin{subfigure}[t]{.16\linewidth}
    \begin{center}
        \includegraphics[width=.9\linewidth]{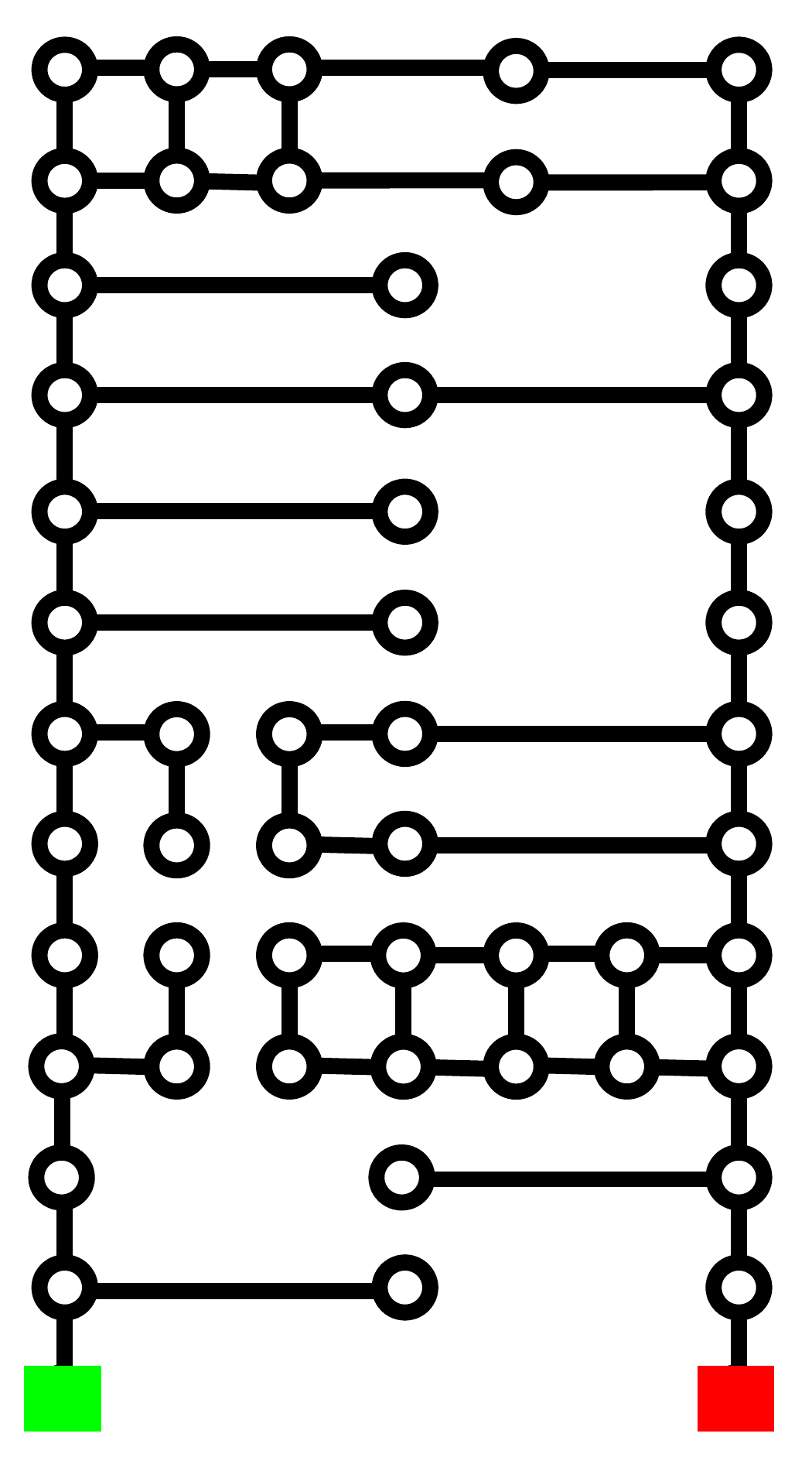}
    \end{center}
    \caption{}
    \label{subfig:grocery4}
    \end{subfigure}
    \begin{subfigure}[t]{.16\linewidth}
    \begin{center}
        \includegraphics[width=.9\linewidth]{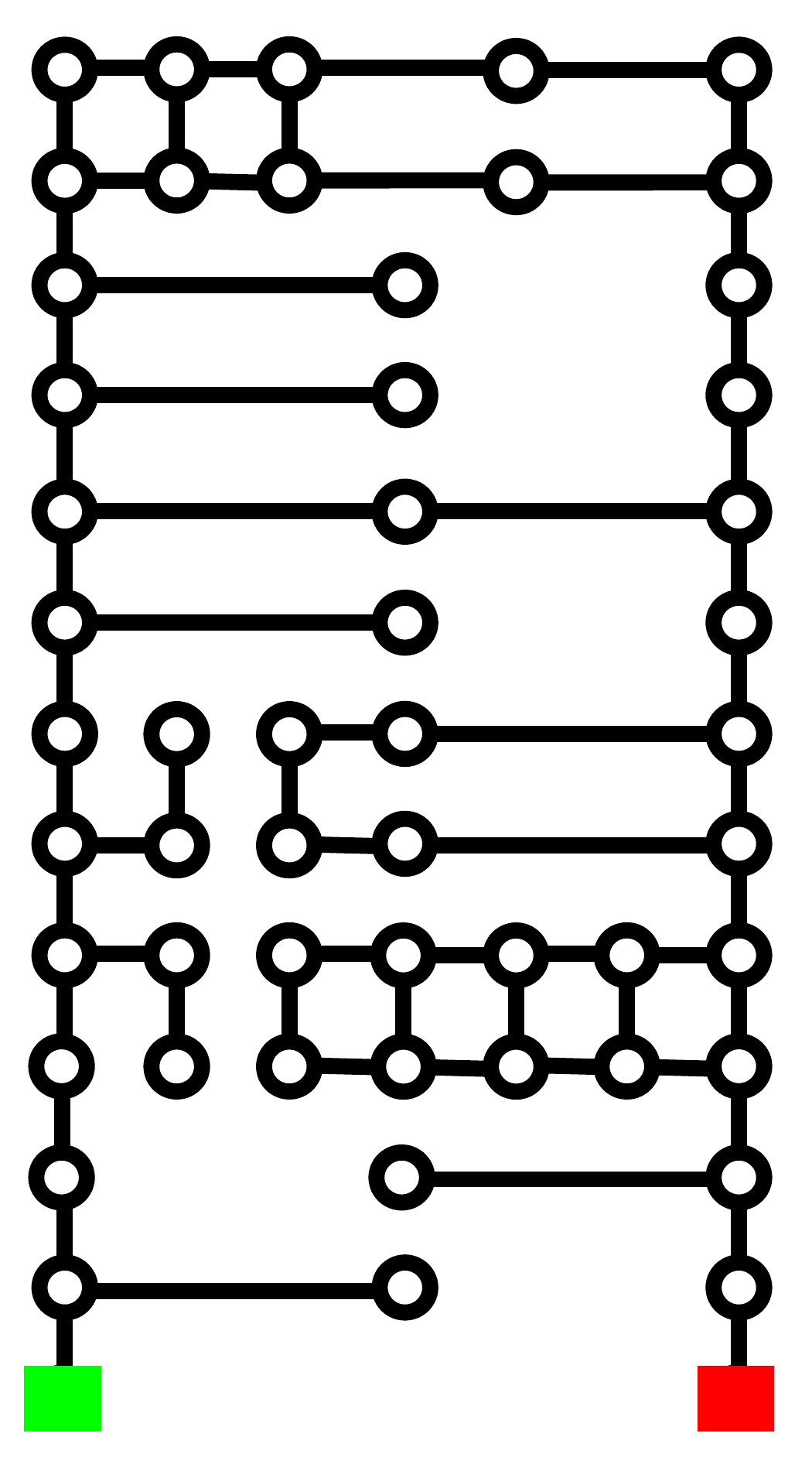}
    \end{center}
    \caption{}
    \label{subfig:grocery5}
    \end{subfigure}
    \begin{subfigure}[t]{.16\linewidth}
    \begin{center}
        \includegraphics[width=.9\linewidth]{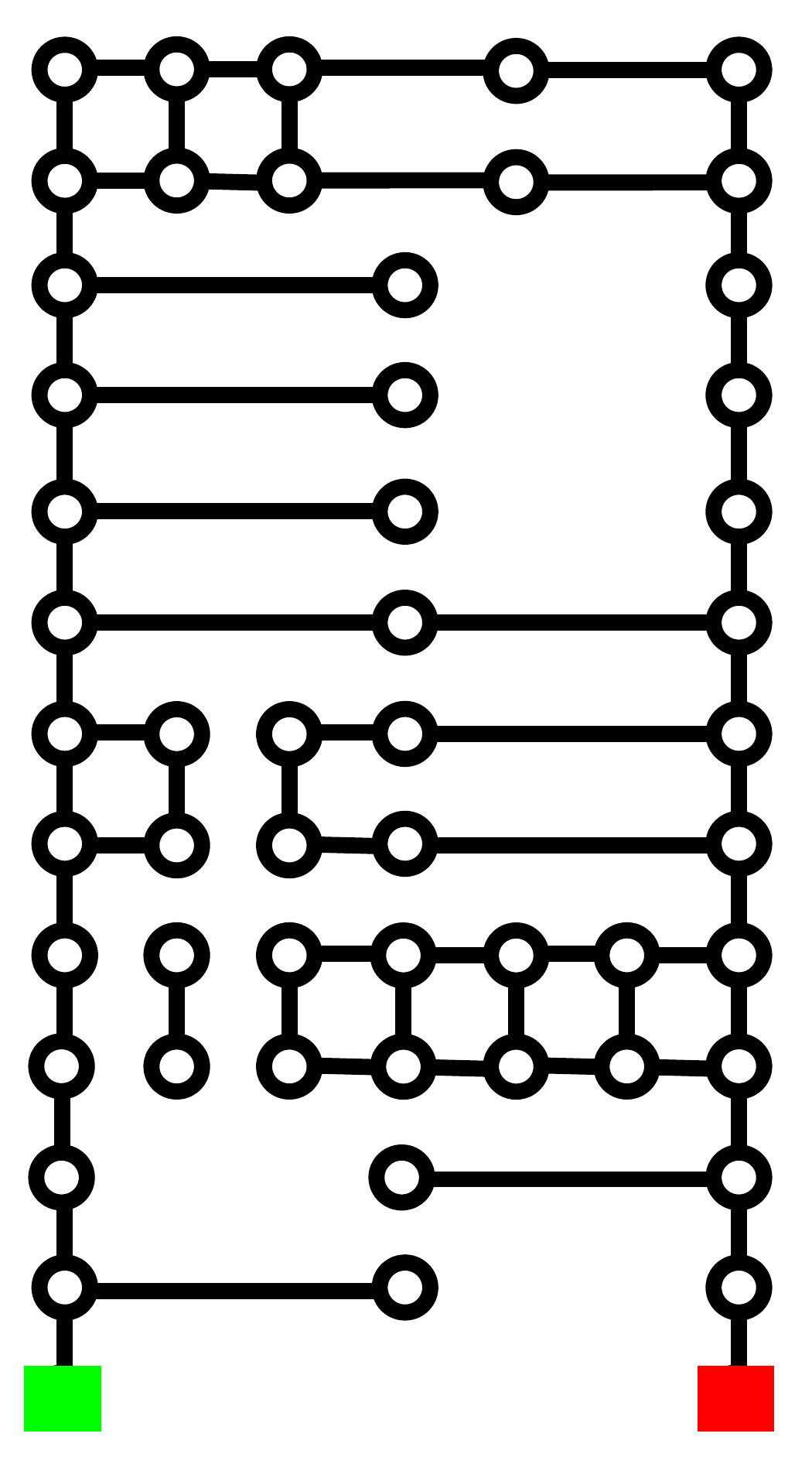}
    \end{center}
    \caption{}
    \label{subfig:grocery6}
    \end{subfigure}
    \caption{Realizations of test environment 1 that the robot may operate in to traverse from start to goal, shown in green and red respectively}
    \label{fig:simenvironmentc}
\end{figure*}

%----------------------------------------------------------------------
\subsection{Policy Execution}
%----------------------------------------------------------------------
The RPP-Hybrid policy returned in Section~\ref{subsec:LRPPsol} assumes that the robot can resolve an edge at an edge endpoint. As the previous subsection has shown, this may not always be true. As a result, minor modifications need to be made to the policy which can be applied during the policy execution.

In the RPP-Hybrid policy, when observing an edge (ex. $(v,u)$), the leg ends at one of the edge's endpoints (ex. $v$). To ensure that the robot completes the observation (i.e. resolves the edge), $u$ needs to be appended to the leg. The robot may or may not arrive at $u$ before completing the observation. After the observation is completed, the robot can return to $v$ or if the next leg in the policy begins with the edge $(v,w)$, it can set $w$ as the next vertex to move to.

When moving from one vertex to the next, the robot may not be at a vertex when it discovers an edge is blocked, so care must be taken when recalculating a path for the optimistic policy. Suppose the robot is traversing edge $(v,u)$, and discovers it is blocked. We first insert a temporary vertex $v_R$ containing the robot's current position $\mathbf{x}_R$ into the navigation graph. Second, a temporary edge is inserted for each edge from $v$ to another portal in the same submap that is not blocked. An edge is also added to $v$. The cost of each edge is the euclidean distance to $\mathbf{x}_R$, i.e., $||\mathbf{x}_R -u||_2$. Finally, we run the shortest path algorithm in the modified graph with $v_R$ as the starting vertex and $v_g$ as the goal.

\section{Results} 

In this section, we evaluate the performance of the RPP-Hybrid policy and the LAMP framework in the following experiments:  1) the RPP-Hybrid policy is simulated in a discrete environment and compared with the existing UCT-CTP algorithm~\cite{Eyerich2010} and the factor graph approach described in~\cite{nardi2020long}; 2) LAMP is simulated in a continuous environment with a real navigation stack to verify that our assumptions hold; and 3) LAMP is implemented in a scenario with a real robot, demonstrating the performance in real-world conditions such as non-ideal localization. The implementation of the RPP-Hybrid policy used in the experiments uses the simple approach to map merging where new maps are merged with the first agreeing super map. We have included a supplementary video showing trials from the LAMP simulations and the real robot experiment. This will be available at http://ieeexplore.ieee.org.

\label{sec:implresults}
%----------------------------------------------------------------------

%----------------------------------------------------------------------
\subsection{RPP-Hybrid Simulation Results} \label{sec:lrppresults}
%----------------------------------------------------------------------

\begin{figure}
    \begin{subfigure}[t]{.49\linewidth}
    \begin{center}
        \includegraphics[width=.5\linewidth]{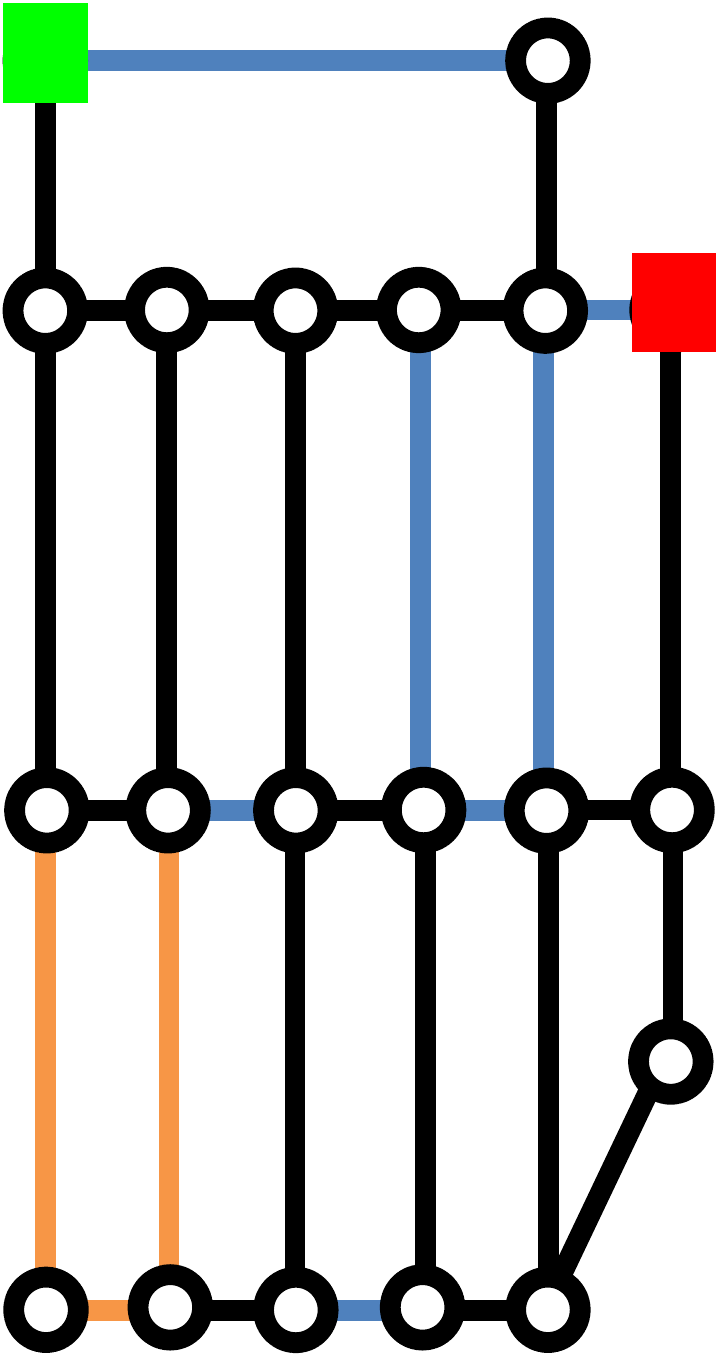}
    \end{center}
    \caption{Test environment 2}
    \label{subfig:beauty}
    \end{subfigure}
    \begin{subfigure}[t]{.49\linewidth}
    \begin{center}
        \includegraphics[width=.5\linewidth]{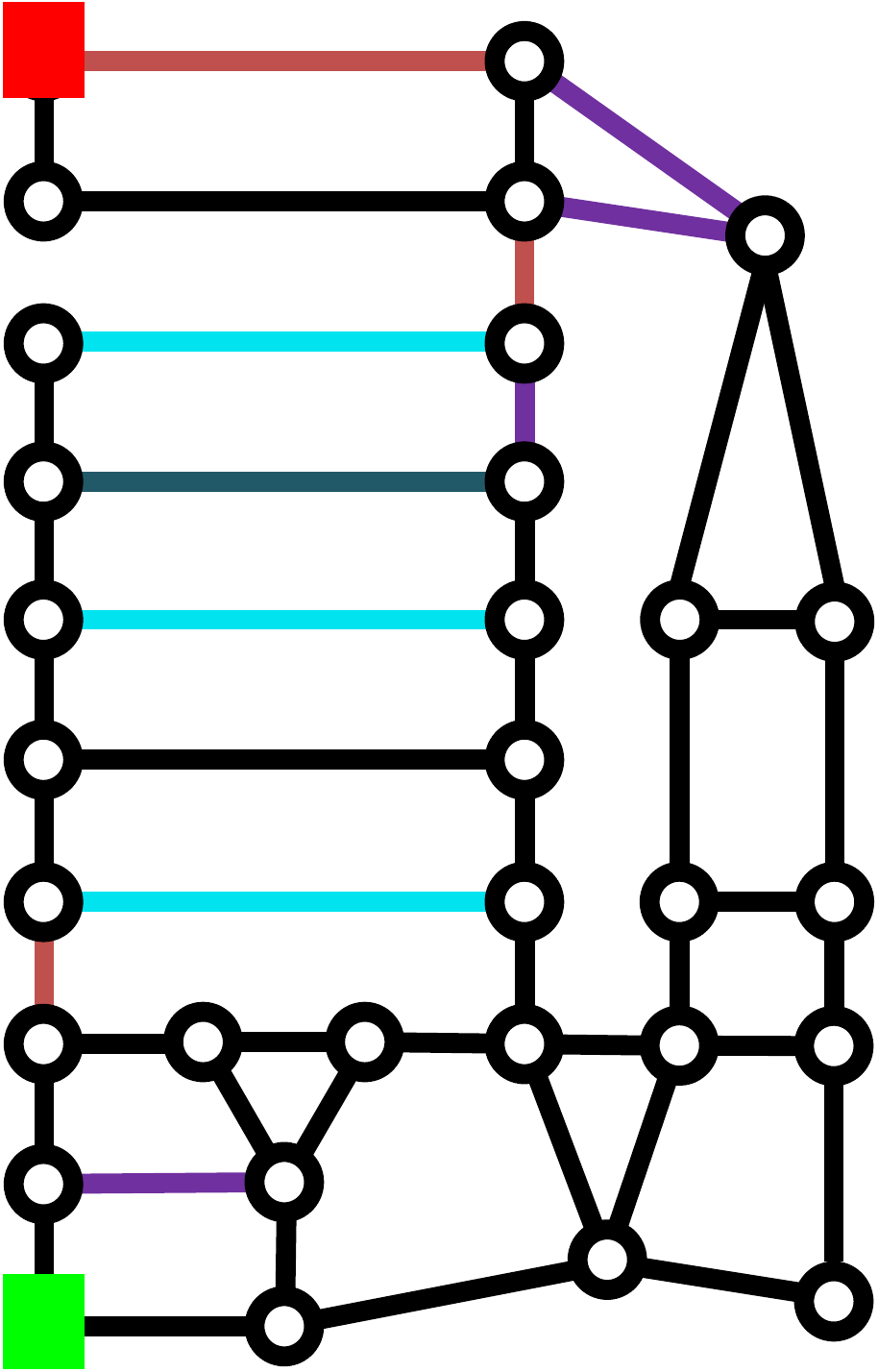}
    \end{center}
    \caption{Test environment 3}
    \label{subfig:produce}
    \end{subfigure}
    \caption{Additional test environments. Coloured edges are blocked in different realizations according to  Table~\ref{tab:simconfigs}.}
    \label{fig:simenvironments}
\end{figure}

To evaluate the RPP-Hybrid policy, simulations were run and the performance of RPP-Hybrid was compared to a purely optimistic planner, an implementation of the UCT-CTP algorithm using independent edge probabilities, and the factor graph approach described in~\cite{nardi2020long}. The optimistic planner is an implementation of A* that initially assumes all edges are unblocked and updates the map and replans when it encounters a blocked edge, with no information saved between each trial. The implementation of UCT-CTP keeps count of the number of times it has observed each edge in either state and calculates independent probabilities for each edge. The algorithm utilizes these probabilities to compute a new policy before each task. The factor graph approach maintains a compact approximation of the joint probability mass function of the edges, and uses rollouts similar to UCT-CTP to generate a path. In these experiments, we only use the observations along the robot's path to update the robot's model of the environment between tasks. The UCT-CTP approach, factor graph approach, and the optimistic approach are all calculated online. The RPP-Hybrid policy is computed offline (i.e., prior to each task execution), however an online version of the policy is described in \cite{Tsang2020}.

The environments for these simulations are graphs that were manually drawn from the floor-plan of a grocery store and are shown in Figures~\ref{fig:simenvironmentc} and~\ref{fig:simenvironments}. Different realizations of Environment 1 are given in Figure~\ref{fig:simenvironmentc} and different realizations of Environments 2 and 3 are given in Table~\ref{tab:simconfigs}. Each algorithm was run in $100$ trials of $100$ tasks each and was given the same sequence of realizations, which was obtained by randomly selecting a realization for each trial with uniform probability.

 \begin{table}
 \caption{Different realizations of test Environments 2 and 3.}
 \label{tab:simconfigs}
 \centering
 \begin{tabular}{c c l}
     \toprule
     Environment & Realization & Blocked Edges  \\
     \midrule
     \multirow{3}{*}{2} & 1 & none \\ & 2 & Blue \\ & 3 & Blue, Orange \\
     \midrule
     \multirow{4}{*}{3} & 1 & none \\ & 2 & Orange \\ & 3 & Blue, Cyan \\ & 4 & Blue, Purple \\
     \bottomrule
 \end{tabular}
 \end{table}

 \begin{table}
 \caption{Policy Cost as a Percentage of Optimal.}
 \label{tab:simcostcomparison}
 \centering
 \begin{tabular}{c l c c}
     \toprule
    Environment & Policy & Average (\%) & Last 10 Trials (\%) \\
    \midrule
    \multirow{4}{*}{1} & Optimistic & $218$ & $219$ \\
     & UCT-CTP & $218$ & $218$ \\
     & Factor Graph & $207$ & $179$ \\
     & RPP-Hybrid & $129$ & $117$ \\
     
    \midrule
    \multirow{4}{*}{2} & Optimistic & $110$ & $110$ \\
     & UCT-CTP & $114$ & $113$ \\
     & Factor Graph & $117$ & $110$ \\
     & RPP-Hybrid & $100$ & $100$ \\
     \midrule
    \multirow{4}{*}{3} & Optimistic & $123$ & $122$ \\
     & UCT-CTP & $129$ & $125$ \\
     & Factor Graph & $131$ & $121$ \\
     & RPP-Hybrid & $107$ & $104$ \\
     \bottomrule
 \end{tabular}
 \end{table}

\begin{figure}
    \begin{center}
    \includegraphics[width=\linewidth]{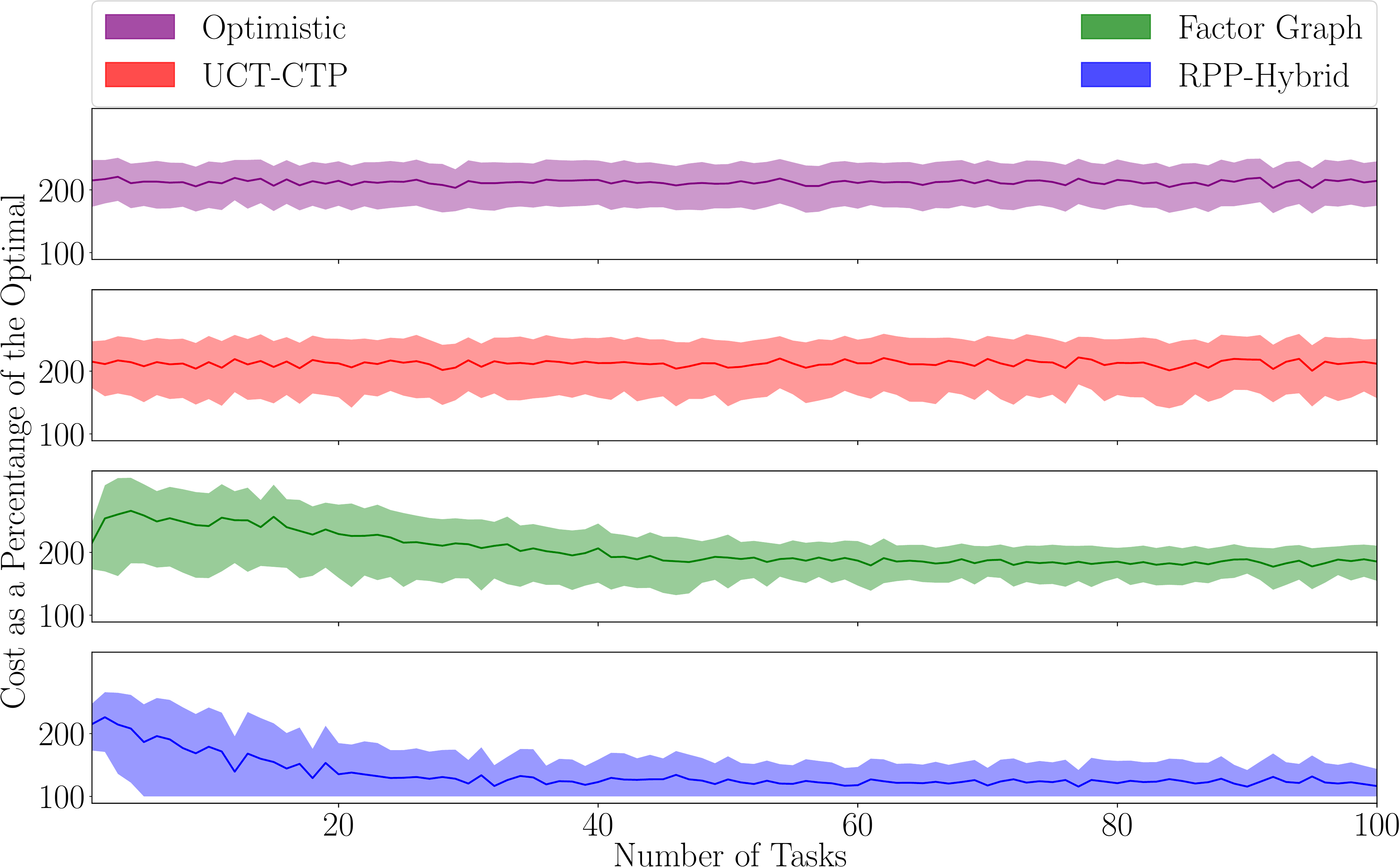}
    \end{center}
    \caption{Cost of optimistic, UCT-CTP, factor graph, and RPP-Hybrid policy relative to the optimal cost in Environment 1.}
    \label{fig:policycomparison}
\end{figure}

In these experiments, we consider the optimal cost to be the cost of the shortest path in the given realization. The average costs compared to the optimal are shown in Table~\ref{tab:simcostcomparison}, as well as the average cost of the last $10$ trials to show the performance each algorithm converged to. Figure~\ref{fig:policycomparison} shows the average cost of each policy in Environment 1 as the number of tasks completed increases.

The UCT-CTP policy with independent edge probabilities performs no better than the optimistic policy in all environments, and is not able to make significant improvements to its policy as it completes tasks. The Factor Graph approach, although approximating the correlation between the obstacles, is initially outperformed by the simple policies like the optimistic policy and UCT-CTP which ignore the correlations in the environment. This method eventually provides better solutions as it completes more tasks and builds a better model of the environment. Similar behaviour of the Factor Graph approach is shown for different environments in~\cite{nardi2020long}. The RPP-Hybrid approach outperforms all the policies in every environment. In Environment 1, it performs $89\%$ better than the optimistic and UCT-CTP approaches, and $78\%$ better than the Factor Graph approach. Also, its performance does not suffer in the first few trials as it learns the environment. Observe that the solution provided by the RPP-Hybrid policy approaches the optimal cost in Environments 2 and 3 and near optimal cost in Environment 1 with a sufficient number of task executions. 

One shortcoming of the UCT-CTP algorithm, factor graph approach, and the RPP-Hybrid policy is that with incomplete observations, they can converge to a sub-optimal policy where the cost of taking informative observations that would improve the policy is too high according to the model of the environment that the robot has. The UCT-CTP algorithm and the factor graph approach, both of which use rollouts, are particularly susceptible to this. Another weakness of the factor graph approach in this scenario is that it is loosely structured and lacks mechanisms to prevent excessive exploration during the earlier trials, and to purposely exploit known correlations in later trials. The RPP-Hybrid approach switches to an optimistic policy when it encounters a previously unseen environment, allowing its worst-case performance to stay close to that of the optimistic policy.

The runtime of each algorithm is primarily dictated by its number of shortest-paths calculations. The optimistic policy only performs this calculation when it observes a blocked edge on its current path, and was able to complete the 10000 tasks in Environment 1 in under a minute. The RPP-Hybrid policy performs this calculation once for every super map, and thus the runtime scales with the number of supermaps collected. In our experiments, since the number of supermaps remains low, the runtime is in the same order of magnitude of the optimistic policy, completing the same tasks in approximately 8 minutes. The UCT-CTP algorithm and factor graph approach compute a shortest path for each rollout, which presents a tradeoff: with the number of rollouts set in the 50-100 range, the algorithms can be implemented online.  However, the authors of the original UCT-CTP \cite{Eyerich2010} algorithm note that performance continues to improve past 1000 rollouts, beyond what is feasible on a modern desktop CPU. With the number of rollouts set to 100 in our experiments, the two algorithms were each able to complete the 10000 tasks for Environment 1 in approximately 10 hours.

%----------------------------------------------------------------------
\subsection{LAMP Simulation Results} \label{sec:lampsimresults}
%----------------------------------------------------------------------

 \begin{table}
 \caption{Existing algorithms used in experiment.}
 \label{tab:ros}
 \centering
 \begin{tabular}{l l p{35mm}}
     \toprule
     Component & ROS Package & Algorithm \\
     \midrule
     Global Planner & global\_planner & A* \\
     Local Planner & base\_local\_planner & Dynamic Window Approach \cite{Dieter1997} \\
     Localization & amcl & Augmented Monte Carlo Localization \cite{pfaff2006robust} \\
     Costmap & costmap\_2d & Layered costmap \cite{Lu2014}\\
     \bottomrule
 \end{tabular}
 \end{table}

\begin{figure}
    \centering
    \begin{subfigure}{.49\linewidth}
        \centering
        \includegraphics[width=.9\linewidth]{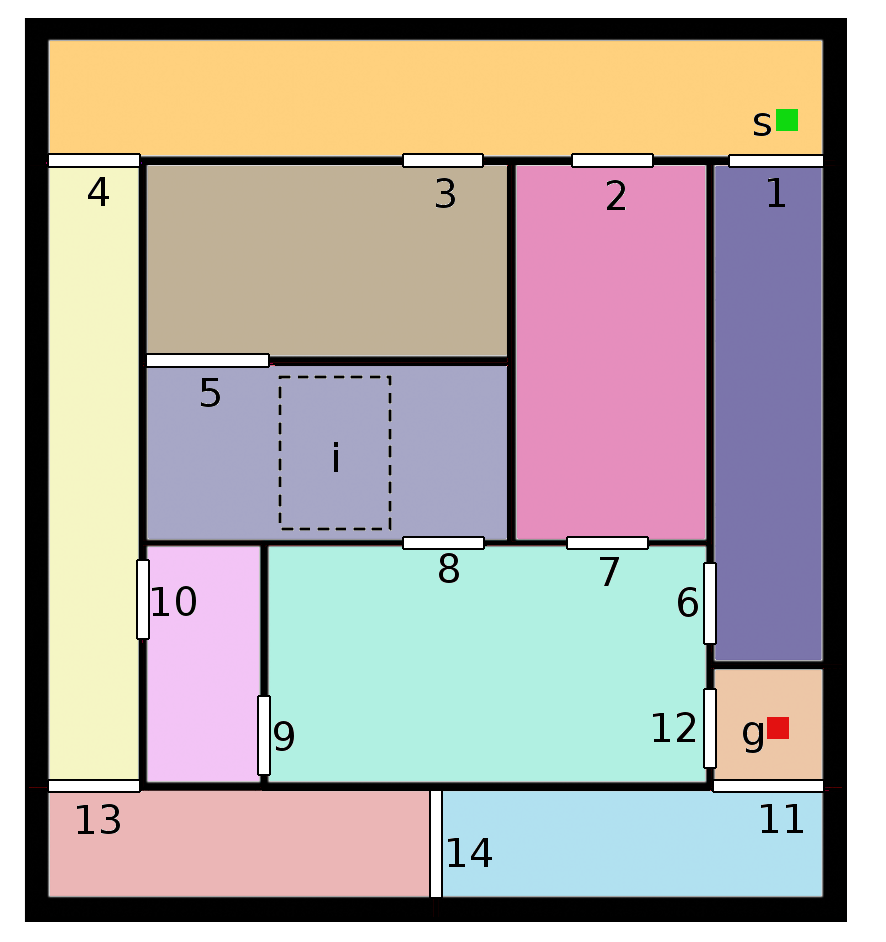}
        \caption{}
        \label{subfig:simmap}
    \end{subfigure}
    \begin{subfigure}{.49\linewidth}
        \centering
        \includegraphics[width=.9\linewidth]{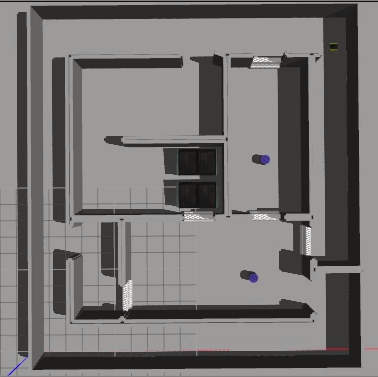}
        \caption{}
        \label{subfig:simex}
    \end{subfigure}
    \caption{(a) Base occupancy grid of the Gazebo test environment ($20$ m $\times$ $20$ m) showing submap decomposition, labelled portals, and a potential obstacle $i$. The start and goal vertices are labelled as $s$ and $g$ respectively. (b) Example of environment configuration with obstacles. In this example, obstacles appeared in $2, 6, 7, 8, 9$, and $i$, along with some random debris.}
    \label{fig:simex}
\end{figure}

An office environment was constructed and simulated in Gazebo. The navigation graph used for policy generation was manually constructed such that it is not a multigraph. All of the code for the policy generator and edge observer is in Python 2.7, which is integrated as a package for ROS Kinetic. The Visilibity library \cite{VisiLibity1:2008} was used to estimate the known free space for the edge observer. The LAMP framework was built on top of the ROS navigation stack, which already includes all the components in the standard navigation stack in Figure~\ref{fig:overall}. In the ROS navigation stack, the path planner has two components for planning and executing trajectories, namely global planner and local planner. Table~\ref{tab:ros} outlines which ROS packages and therefore which algorithms were used for each component of the navigation stack in our implementation. Our high level planner uses the move\_base action library to interact with the ROS navigation stack. The simulations were run on a PC with a 4.2 GHz Intel Core i7 processor with 32 GB memory and a GeForce GTX 1060 GPU with 6 GB of VRAM, using the ROS Gazebo simulator. The robot platform is a Clearpath Jackal (see Figure~\ref{fig:front_image}), a small differential drive unmanned ground vehicle (0.5m $\times$ 0.5m footprint) that has been equipped with a SICK LMS111 LiDAR. Figure~\ref{fig:simex} depicts a top-down view of the environment the robot operated in and Table~\ref{tab:sim_prob} shows the probability of edge blocking obstacles appearing. Each column of obstacles in Table~\ref{tab:sim_prob} is perfectly correlated, meaning either all the obstacles in the column are present, or none of them are present. If an obstacle appears on a portal, a white barrier blocks the portal, as shown in Figure~\ref{subfig:simex}. If the region $i$ is to have an obstacle appear, then two rectangular obstacles can appear anywhere in the submap, but in the same orientation as in Figure~\ref{subfig:simex}. In addition, to simulate scattered debris, up to 10 blue cylinders will spawn in random locations throughout the environment that will not affect the traversability of submaps, but will require the local planner to react.

The occupancy grid was converted into a graph with 16 vertices and 33 edges. The uncertainties in Table~\ref{tab:sim_prob} resulted in 32 possible configurations for the test environment, not accounting for all the different configurations the blue cylinders could be in.

The robot is tasked with repeatedly going from $s$ to $g$. We ran $10$ trials with $100$ task executions each. A selection of these trials are shown in the supplementary video. The environment configuration for each execution was randomly selected according to the probabilities in Table~\ref{tab:sim_prob}. We tested three policies for each trial: Simple, Optimistic, and RPP-Hybrid. The Simple policy does not use the navigation graph, we simply send the goal position $g$ to the navigation stack and the path planner guides the robot to the goal. This is the baseline we compare our policies to. The reasoning is that this is available out-of-the-box by ROS and presumably many robots will use this method of navigation.

Figure~\ref{fig:lampsavings} shows the average distance travelled by the RPP-Hybrid policy after performing a certain number of task executions, as a percentage of the distance travelled by the simple policy. The results show that the RPP-Hybrid outperforms the optimistic and the simple policy by $20-30\%$ and the mean performance improves monotonically over time. Note that the high-level policies for the simple policy and the optimistic policy are similar, however, the slight difference in their performance is due to small differences in distances in the costmap used by the simple policy and the navigation graph used by the optimistic policy.

Figure~\ref{fig:nsupermaps_optcalls} shows the rate of switches to the Optimistic policy decreasing exponentially as the number of completed task executions increases, which is the desired behavior.

\begin{table}
    \caption{Probabilities of obstacles appearing in the simulation.}
    \label{tab:sim_prob}
    \centering
    \begin{tabular}{lccccc}
         \toprule
         Obstacles &1 &7, 8, 9 &2, 6 & 3, 10, 12& $i$ \\
         \midrule
         Probability & 0.1&0.4& 0.5&0.6&0.9\\
         \bottomrule
    \end{tabular}
\end{table}

\begin{figure}
    \centering
    \includegraphics[width=.95\linewidth]{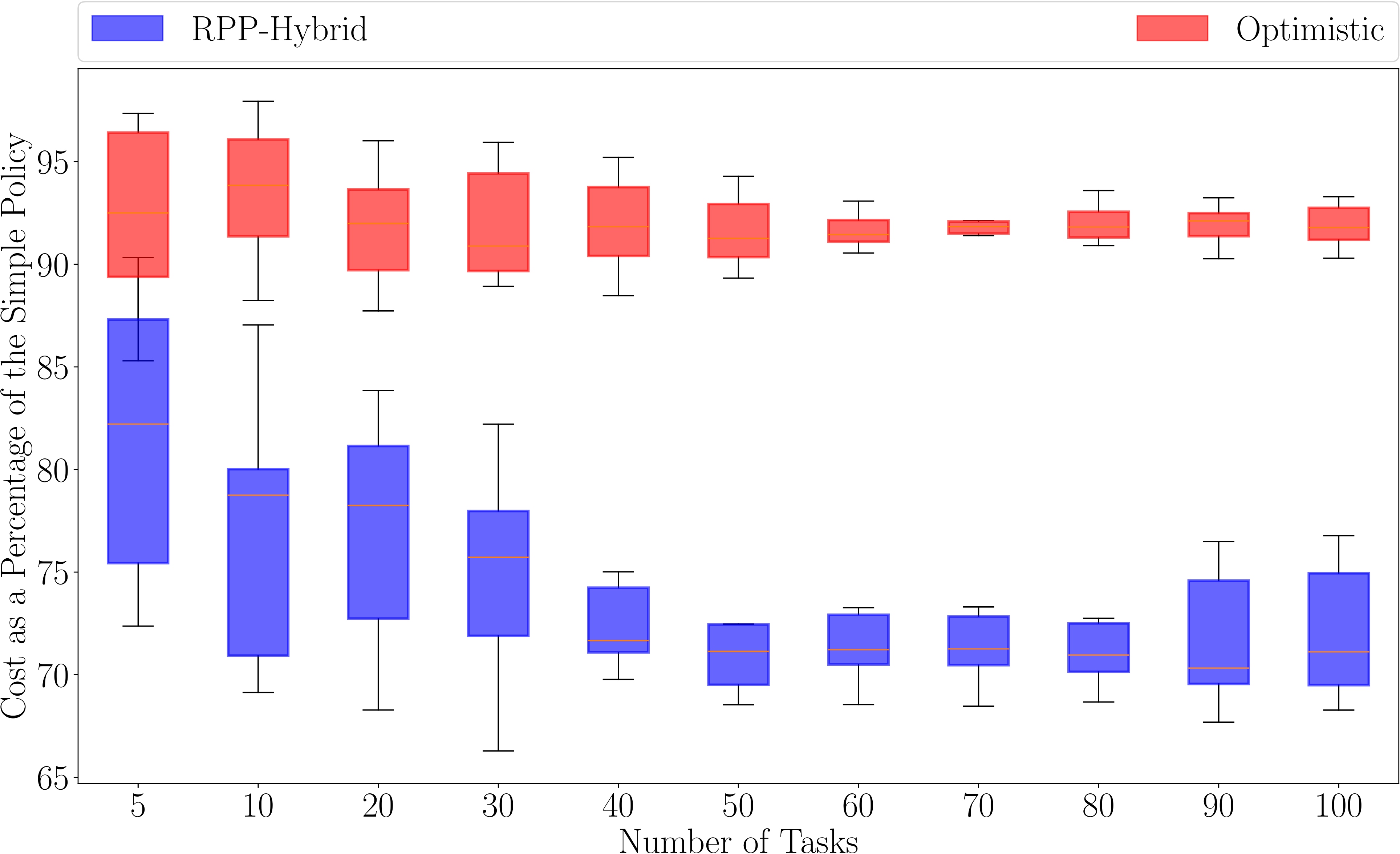}
    \caption{Cost savings of RPP-Hybrid policy normalized to the simple policy.}
    \label{fig:lampsavings}
\end{figure}

The map filter reduces the number of environment configurations that must be remembered by only storing maps with important differences. Figure~\ref{fig:nsupermaps_optcalls} shows the number of super maps plateauing as the number of completed task executions increases. In all the trials, the maximum number of super maps that were stored after 100 tasks was 20. Figure~\ref{fig:simagr} contains two examples of different environments where the maps generated by the robot were in agreement, resulting in considerably fewer maps being stored than if we were to compare the occupancy grids directly.

\begin{figure}
    \centering
    \includegraphics[width=.95\linewidth]{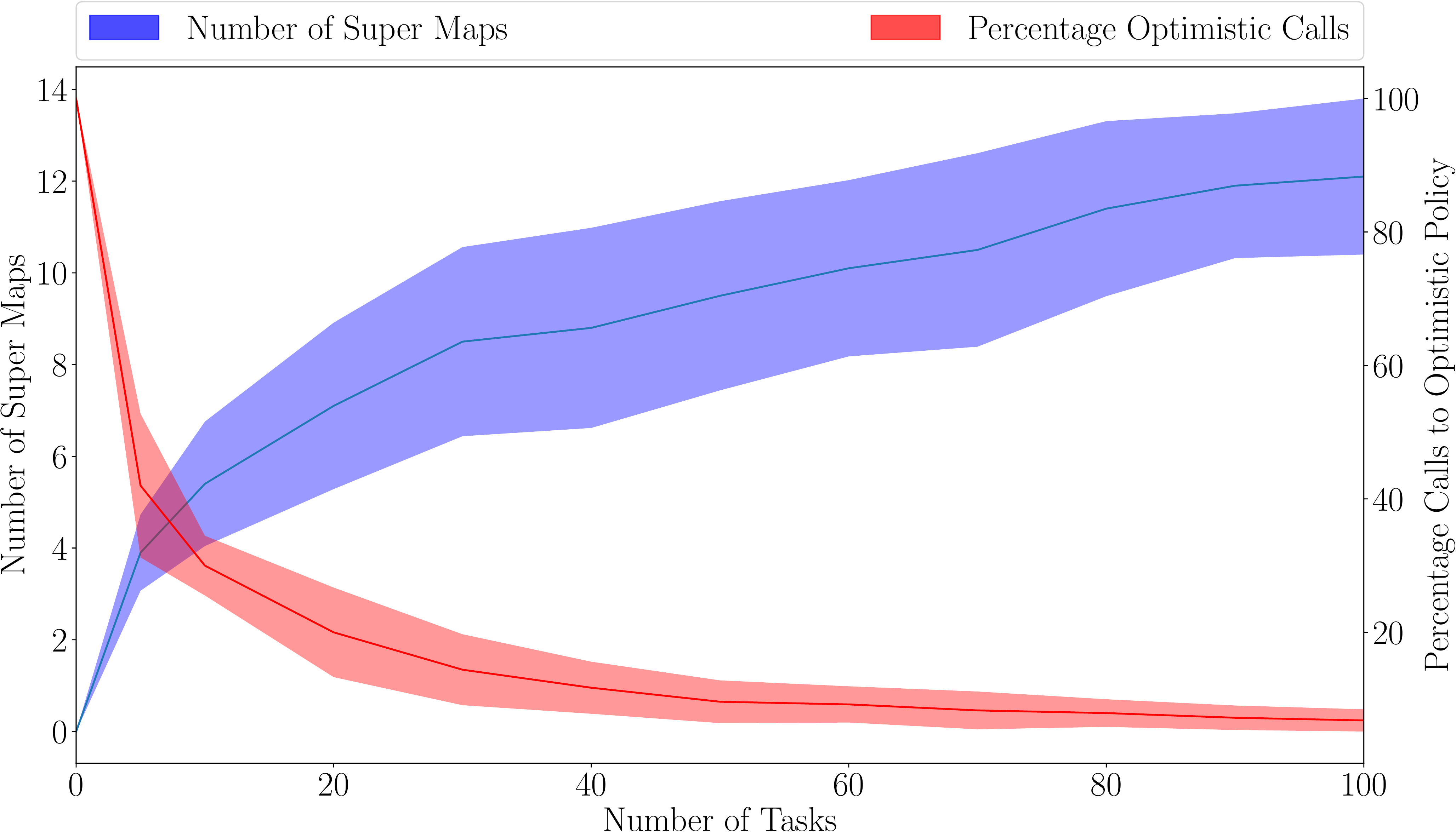}
    \caption[Plots of number of super maps and ratio of switches to the optimistic policy in LAMP simulation]{(blue) Average number of super maps stored compared to the number of tasks that have been executed. (red) Percentage of executed tasks where the high level planner switched to the optimistic policy.}
    \label{fig:nsupermaps_optcalls}
\end{figure}

\begin{figure}
    \centering
    \begin{subfigure}{.49\linewidth}
        \centering
        \includegraphics[width=.9\linewidth]{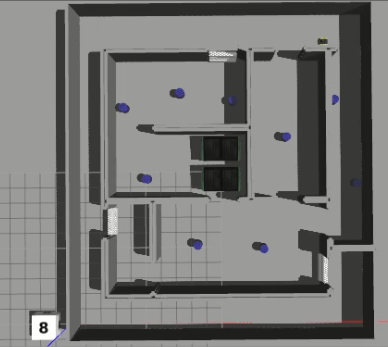}
        \caption{}
        \label{subfig:simagr_a1}
    \end{subfigure}
    \begin{subfigure}{.49\linewidth}
        \centering
        \includegraphics[width=.9\linewidth]{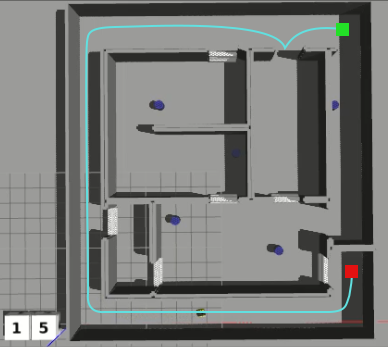}
        \caption{}
        \label{subfig:simagr_a2}
    \end{subfigure}
    \begin{subfigure}{.49\linewidth}
        \centering
        \includegraphics[width=.9\linewidth]{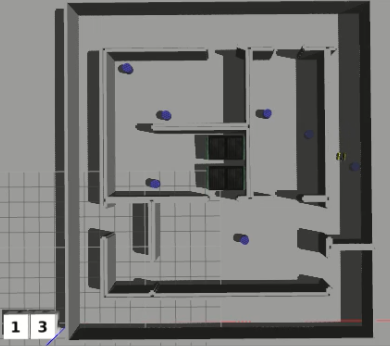}
        \caption{}
        \label{subfig:simagr_b1}
    \end{subfigure}
    \begin{subfigure}{.49\linewidth}
        \centering
        \includegraphics[width=.9\linewidth]{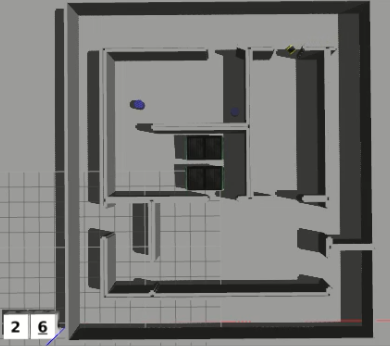}
        \caption{}
        \label{subfig:simagr_b2}
    \end{subfigure}
    \caption[Examples of map agreement in LAMP simulation]{Two examples of different environment configurations that satisfy map agreement. The top environments agree because in task 15, the robot did not see the barriers in the center along its route (light blue). The bottom environments agree despite the blue cylinders being in different locations.}
    \label{fig:simagr}
\end{figure}

Although minimizing computation time is not the focus of this work, it should be noted that during the simulations, the edge observer was operating at a rate of roughly 3 Hz, or observing 3 edges per second. While this was slower than desired, resulting in some edges remaining unknown when they should have been resolved, the RPP-Hybrid policy still performed better than the Simple policy. However, there is room for optimization and employing a faster path finder than A* would likely increase the edge observation rate.

%----------------------------------------------------------------------
\subsection{Robot Experiment} \label{sec:exp}
%----------------------------------------------------------------------
\begin{figure}
    \centering
    \begin{subfigure}[t]{.49\linewidth}
        \centering
        \includegraphics[width=\linewidth]{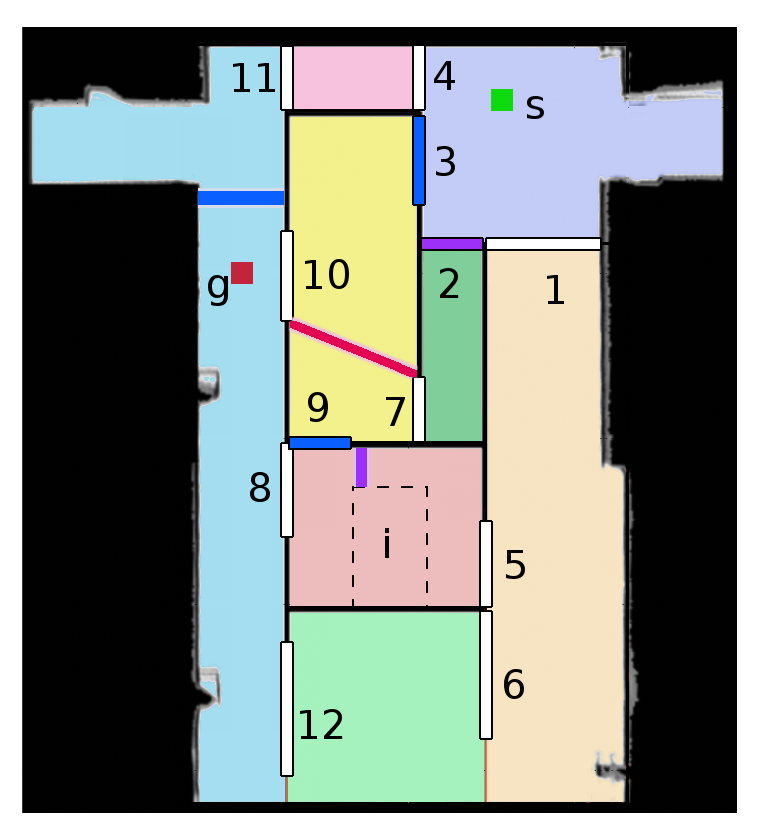}
        \caption{}
        \label{subfig:realmap}
    \end{subfigure}
    \begin{subfigure}[t]{.49\linewidth}
        \centering
        \includegraphics[width=.8\linewidth]{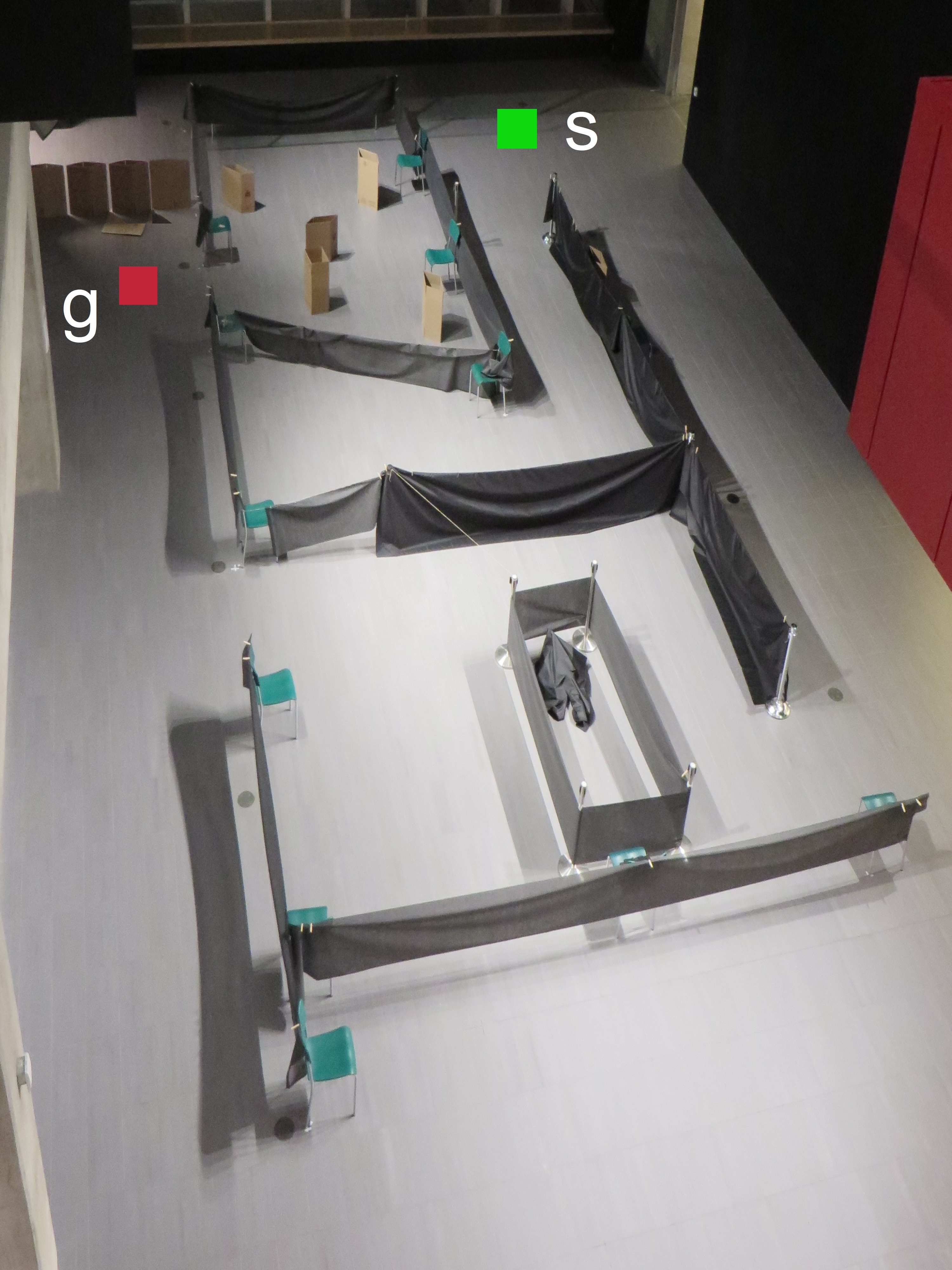}
        \caption{}
        \label{subfig:realex}
    \end{subfigure}
    \caption{(a) Base occupancy grid of real environment (20 m $\times$ 10 m) with submap decomposition, labelled portals, and potential obstacle locations. The start and goal vertices are labelled as $s$ and $g$ respectively. (b) Example of environment configuration with obstacles. In this example, $i$, the dark blue, and pink obstacles exist, along with some debris in the yellow submap.}
    \label{fig:real_ex}
\end{figure}

\begin{table}
    \caption{Probabilities of obstacles appearing in the environment.}
    \label{tab:real_prob}
    \centering
    \begin{tabular}{lccccccc}
         \toprule
         Obstacle &dark blue & \phantom{a}&purple &\phantom{a}&pink &\phantom{a}& $i$ \\
         \midrule
         Probability & 0.6& &0.3&& 0.9&&1.0\\
         \bottomrule
    \end{tabular}
\end{table}

In our physical experiments a Clearpath Jackal was used, similar to the simulation. This was equipped with a Velodyne Puck, a $360^{\circ}$ LiDAR. The environment that the robot operated in is shown in Figure~\ref{fig:real_ex}, which is an event space in the Engineering 7 building on the University of Waterloo campus. The walls in this space are primarily glass, raising several challenges for localization.  The probability of obstacles appearing is shown in Table~\ref{tab:real_prob}. The occupancy grid provided to the robot in Figure~\ref{subfig:realmap} was generated by running the Cartographer SLAM algorithm \cite{Cartographer2016} in the empty area, with the walls of the maze later drawn manually. 

This test environment has $14$ vertices and $35$ edges, with $8$ different realizations. We ran $2$ trials of $10$ tasks, comparing only the Simple and RPP-Hybrid policies. The environment realization for each task execution was randomly selected according to the probabilities in Table~\ref{tab:real_prob}. A selection of these trials are shown in the supplementary video.

The distance travelled by the robot in meters for each task is shown in Table~\ref{tab:real_results_full}. Due to localization errors, the robot incorrectly resolved some edges.  In the table, we mark trials as having ``minor mapping errors'' if between 1 and 4 edges were incorrectly resolved.  We mark trials as having ``major mapping errors'' if more than 4 edges were resolved incorrectly. Trial 1 shows the performance of the algorithm when there are few mapping errors during task execution, which results in a $10.4\%$ improvement in the average travel distance with respect to the simple policy. Trial 2 shows the robustness of the proposed algorithm in the presence of many mapping errors due to non-ideal conditions for sensors and localization. Although $7$ out of $10$ trials were completed with mapping errors, the proposed algorithm still outperforms the simple policy by $5.1\%$. Note that mapping errors in the initial tasks are more detrimental, as it results in an incorrect model of the environment that must be used in later tasks. The effect of this can be seen in task 6, where the robot experiences no mapping errors, but is given a policy that performs worse than the simple policy. While these mapping errors did happen in the simulation, it was rare and they were not as impactful. Because of the continued COVID-19 restrictions, we are able to only present data collected prior to the initial lockdown, thus the limited number of trials. We hope to perform more trials in future work.
\begin{table}
    \caption{Results from robot experiments}
    \label{tab:real_results_full}
    \centering
     \begin{threeparttable}
    \centering
    \begin{tabular}{c  c c c ccc}
         \toprule
         Task &\phantom{a}& \multicolumn{2}{c}{First Trial}&\phantom{a}&\multicolumn{2}{c}{Second Trial}\\
         \cmidrule{3-4}\cmidrule{6-7}
         &&Simple & Ours &&Simple & Ours\\
         \midrule
         1  && 10.2 & 10.7 \hphantom{1} && 74.5   & 60.4 ${\ast}$\\
         2 && 74.5  & 67.4 ${\ast}$&& 70.2 & 39.2 ${\diamond}$\\
         3 && 65.1  & 57.6 \hphantom{1}&& 10.3 & 32.8 ${\diamond}$\\
         4 && 10.2 & 10.8 \hphantom{1}&& 59.7 & 53.5 ${\ast}$\\
         5 && 65.1  & 31.9 \hphantom{1}&& 60.8 & 53.0 \hphantom{1}\\
         6 && 10.2  & 11.4 \hphantom{1}&& 10.1 & 45.4 \hphantom{1}\\
         7 && 10.2  & 10.9 \hphantom{1}&& 23.8   & 13.4 ${\ast}$\\
         8 && 65.1  & 87.8 ${\diamond}$&& 75.1  & 57.3 ${\diamond}$\\
         9 && 10.2  & 10.8 \hphantom{1}&& 22.1  & 42.8 \hphantom{1}\\
         10 && 74.5& 54.8 \hphantom{1}&& 59.9 & 45.0 ${\ast}$\\
         \midrule
         Average& & 39.5  & 35.4 \hphantom{1}&& 46.6 & 44.3  \hphantom{1}\\
         \bottomrule
    \end{tabular}
    
    \smallskip
    \scriptsize
    \begin{tablenotes}
    \item The minor mapping errors in task execution are denoted by $\ast$ and the major mapping errors are denoted by $\diamond$.
    \end{tablenotes}
    \end{threeparttable}
\end{table}

%----------------------------------------------------------------------
\section{Discussion}
\label{sec:impldisc}
%----------------------------------------------------------------------

An important consideration is how the environment is decomposed into submaps.  The size of the submaps will affect the performance of the LAMP framework. Increasing the submap size can decrease the size of the resulting navigation graph and reduce the sensitivity of the learning to changes in the environment. However, doing so increases the robot's reliance on the path planning component to navigate, and increases the computation time of the edge observer. Conversely, decreasing the submap size can reduce the computation time of the edge observer to find a path between vertices, but increases the robot's sensitivity to small changes in the environment, causing it to remember more super maps.

As noted in Section~\ref{sec:lrppresults}, there is a possibility for the RPP-Hybrid policy to become trapped in a suboptimal policy. This can happen when the robot encounters an improbable series of realizations that is not representative of the true pmf of the environment. If the robot's perceived probability of an edge is too low, the expected cost of traversing that edge may be high enough that the policy will never explore that edge again, even if its probability estimate of that edge is poor. To remedy this, one could implement a way for the robot to forget old observations such as using a sliding window of observations for policy construction. This would ensure the robot's observations are consistently updated.

In the robot experiments, there were many challenges with localization that we planned to mitigate given more time. For example, there were some windows that could be blocked to improve LiDAR measurements. Another improvement could be to use SLAM to generate the base map with all permanent obstacles present, rather than an idealized floorplan that lacks the additional obstacle texture and depth details.  However, despite these issues, LAMP was still able to learn structure, and exploit it to improve performance.

%----------------------------------------------------------------------
\section{Conclusions}
%----------------------------------------------------------------------

This paper introduced the LAMP framework, which can be easily added to an existing navigation stack. This framework allows for the robot to utilize past experience to improve navigation of a familiar environment. We detailed the hybrid map structure that uses an occupancy grid and a navigation graph for the robot to navigate in when using LAMP. We then proposed the edge resolver algorithm so the robot can navigate to the goal using the hybrid map. We reviewed how the Optimistic and RPP-Hybrid policies were constructed, explaining how the high level planner functions. The potential of the LAMP framework to improve navigation was supported by our experimental results, which showed a reduction in average travel cost over time, even with imperfect localization and observations.

One direction of future work would be to draw conclusions about the environment based on more general observations, such as identifying signs, specific objects, or even sounds that could indicate edge traversability. Another area of potential investigation is to examine the impact of using different strategies to choose the best map to merge. Current research is looking into the potential of dynamically forming a policy during task execution based on observations, as opposed to the current approach where a complete policy is calculated before executing the task.

% \bibliography{root}{}
\bibliographystyle{IEEEtran}

\end{document}